\def\eqref#1{equation~\ref{#1}}
\def\1{\bm{1}}
\DeclareMathAlphabet{\mathsfit}{\encodingdefault}{\sfdefault}{m}{sl}
\SetMathAlphabet{\mathsfit}{bold}{\encodingdefault}{\sfdefault}{bx}{n}
\newtheorem{assumption}{Assumption}
\newtheorem{theorem}{Theorem}
\newtheorem{corollary}{Corollary}
\newcommand{\MB}{\mathrm{MB}}
\newcommand{\Pa}{\mathrm{Pa}}
\newcommand{\Ch}{\mathrm{Ch}}
\title{Causally-Aware Information Bottleneck for Domain Adaptation}
\author{\name Mohammad Ali Javidian \email javidianma@appstate.edu \\
      \addr Department of Computer Science\\
      Appalachian State University
      }
\begin{document}

\maketitle
\thispagestyle{firstpage}

\begin{abstract}
We tackle a common domain adaptation setting in causal systems. In this setting, the target variable is observed in the source domain but is entirely missing in the target domain. We aim to impute the target variable in the target domain from the remaining observed variables under various shifts. We frame this as learning a compact, mechanism-stable representation. This representation preserves information relevant for predicting the target while discarding spurious variation. For linear Gaussian causal models, we derive a closed-form Gaussian Information Bottleneck (GIB) solution. This solution reduces to a canonical correlation analysis (CCA)–style projection and offers Directed Acyclic Graph (DAG)-aware options when desired. For nonlinear or non-Gaussian data, we introduce a Variational Information Bottleneck (VIB) encoder–predictor. This approach scales to high dimensions and can be trained on source data and deployed zero-shot to the target domain. Across synthetic and real datasets, our approach consistently attains accurate imputations, supporting practical use in high-dimensional causal models and furnishing a unified, lightweight toolkit for causal domain adaptation.
\end{abstract}

\section{Introduction}\label{sec:intro}
Modern prediction systems are rarely deployed in the same environment in which they were trained. 
Shifts in demographics, sensors, or policies alter the joint distribution of variables and erode accuracy.
This motivates \emph{domain adaptation}: transferring knowledge from a labeled \emph{source} to an unlabeled or partially labeled \emph{target} whose distribution differs.
In fact, when a model trained in one environment fails in another, the culprit is rarely randomness; it is a change in the data-generating \emph{mechanisms}.
Two archetypal shifts capture common failures: 
Under \emph{covariate shift}, the context distribution changes while the conditional mechanism remains fixed, i.e., $P(X)$ (or $P_{\text{source}}(X)\neq P_{\text{target}}(X)$) varies but $P(Y\mid X)$ is invariant~\cite{SHIMODAIRA2000,sugiyama2008direct,johansson19a}.
Under \emph{target (label) shift}, the marginal $P(Y)$ differs whereas $P(X\mid Y)$ is stable~\cite{Storkey09,zhang2013domain,Lipton18}.
Broader taxonomies and theoretical results can be found in~\cite{redko2019advances}.
We focus on settings where the target variable is \emph{missing in the target domain} and the goal is to impute it reliably under covariate, target, or constrained generalized target shift where the marginal $P(Y)$ changes, and the conditional $P(X|Y)$ changes with constraints.

\paragraph{Causality as a stabilizer.}
Causal structure provides a lens to separate robust signal from spurious correlation. In theory, causal inference tools aim to protect against instability by aligning with the graph of cause and effect.
For example, \textit{selection diagrams} formalize differences between populations and support \emph{transportability} via do-calculus~\cite{Pearl09,BareinboimPearl11,BareinboimPearl12,BareinboimPearl14,Correa-ijcai2019}.
\emph{Invariant causal prediction} (ICP) searches for subsets whose residuals are stable across environments~\cite{peters2016causal,pfister2019invariant,pfister2019stabilizing}, while \emph{graph surgery} proactively removes unstable mechanisms~\cite{subbaswamy2018counterfactual,subbaswamy2019preventing}.
A complementary line frames adaptation as \emph{graph pruning} to select predictors that yield invariant conditionals~\cite{MagliacaneNIPS18,rojas2018invariant,kouw2019review}.
Despite their guarantees, these methods often require causal effect estimation or counterfactual reasoning, can be conservative (trading variance for zero transfer bias), and may struggle to scale. More recent advances in \emph{invariant risk minimization} \cite{Arjovsky19} and deep generative approaches for causal representation learning \cite{Krueger21,lv2022causality} promise improved robustness to unseen shifts.

\paragraph{Our view: compact, mechanism-stable representations.}
In this paper, we formulate adaptation as learning a \emph{mechanism-stable} summary: a low-dimensional representation that retains target-relevant information and suppresses spurious variation. We design a \emph{DAG-aware information bottleneck} that aligns the encoder with the causal structure while explicitly compressing the observed variables $X$ into a bottleneck $U$, which preserves valuable information for predicting $T$ and discards nuisance variation. For this purpose, we will use the DAG structure information (e.g., parents/Markov blanket) to focus the encoder on stable mechanisms and avoid unstable paths. This approach yields a representation that is small, causally grounded, and stable across domains, with formal guarantees in the Gaussian case and distribution-free justifications for the nonlinear/non-Gaussian regime.

\noindent\textbf{Motivating example (high-dimensional spurious block; MB-invariance holds).}
We consider a linear--Gaussian SEM designed to isolate the benefit of restricting the bottleneck to
\(\mathrm{MB}(T)\) under \emph{domain shift that does not perturb the target mechanism}.
The causal structure is shown in Fig.~\ref{fig:motivEXDAG}.
Let \(C=(C_1,\dots,C_k)\) denote a small set of stable causal drivers of \(T\),
\(S=(S_1,\dots,S_m)\) a high-dimensional block of \emph{spurious proxies} whose relationship to \(C\) changes across domains,
and \(N=(N_1,\dots,N_r)\) a nuisance block affected by the domain but irrelevant to \(T\).
In the \emph{source} domain, the mechanisms are
\[
\begin{aligned}
C_i &\sim \mathcal N(0,1), \qquad i=1,\dots,k,\\
T &= w^\top C + \varepsilon_T,\\
S_j &= a_0\, (w^\top C) + \varepsilon_{S_j}, \qquad j=1,\dots,m,\\
N_\ell &= b_0 + \varepsilon_{N_\ell}, \qquad \ell=1,\dots,r,
\end{aligned}
\]
with mutually independent noises \(\varepsilon_T\sim\mathcal N(0,\sigma_T^2)\),
\(\varepsilon_{S_j}\sim\mathcal N(0,\sigma_S^2)\),
\(\varepsilon_{N_\ell}\sim\mathcal N(0,\sigma_N^2)\),
and a fixed \(w\in\mathbb R^k\).
The Markov blanket of the target is therefore
\(
\mathrm{MB}(T)=\{C_1,\dots,C_k\},
\)
since \(T\) has parents \(C\) and no observed children or spouses in this example.

\begin{figure}[!ht]
  \centering
  \begin{minipage}[m]{0.3\columnwidth}
    \vspace{0pt}\centering
    \resizebox{.85\linewidth}{!}{%
      \begin{tikzpicture}[
        node distance=0.6cm,
        every node/.style={draw, circle, minimum size=7mm, font=\small},
        >={Stealth[round]}, edge/.style={->, thick}
      ]
        \node (C) {C};
        \node[right=1.3cm of C, ultra thick] (T) {T};
        \node[below=1.0cm of C] (S) {S};
        \node[below=1.0cm of T] (N) {N};
        \node[above=0.6cm of $(C)!0.5!(T)$, draw, circle, minimum size=7mm, font=\small] (D) {D};

        \draw[edge] (C) -- (T);
        \draw[edge] (C) -- (S);
        \draw[edge] (D) -- (S);
        \draw[edge] (D) -- (N);
      \end{tikzpicture}
    }%
  \end{minipage}\hfill
  \begin{minipage}[m]{0.65\columnwidth}
    \vspace{0pt}
    \captionof{figure}{\small Causal DAG underlying the motivating example.
    \textbf{C}: stable causal drivers of \(T\) (the Markov blanket);
    \textbf{S}: high-dimensional spurious proxy block whose conditional distribution shifts with \textbf{D};
    \textbf{N}: nuisance block affected by \textbf{D} but irrelevant to \(T\).
    The target mechanism \(p(T\mid C)\) is invariant across domains.}
    \label{fig:motivEXDAG}
  \end{minipage}
\end{figure}
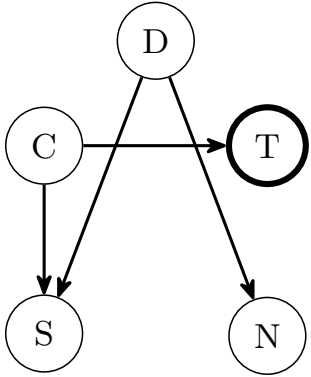

\medskip
\noindent\textbf{Shift scenario.}
We consider a \emph{large} shift in the target domain that changes only non-blanket mechanisms.
Specifically, in the \emph{target} domain we keep the target mechanism unchanged,
\[
T = w^\top C + \varepsilon_T \qquad (\text{same } w,\sigma_T^2),
\]
but we alter the proxy and nuisance blocks via domain-dependent parameters
\[
\begin{aligned}
S_j &= a_1\, (w^\top C) + \varepsilon_{S_j}, \qquad j=1,\dots,m,\\
N_\ell &= b_1 + \varepsilon_{N_\ell}, \qquad \ell=1,\dots,r,
\end{aligned}
\]
with \(a_1\neq a_0\) (in our experiments, a sign flip \(a_0=+1\), \(a_1=-1\))
and \(b_1\neq b_0\).
Crucially, this construction preserves
\[
p_s(T\mid \mathrm{MB}(T)) \;=\; p_s(T\mid C) \;=\; p_t(T\mid C) \;=\; p_t(T\mid \mathrm{MB}(T)),
\]
so the risk-transfer identity under MB-invariance applies, while the \emph{global} input distribution shifts substantially.

As before, \(T\) is unobserved at deployment in the target domain.
We compare two approaches for predicting (imputing) \(T\) under this shift:
\((a)\) the Markov blanket Gaussian Information Bottleneck (GIB),
and \((b)\) the global GIB (see Sect.~\ref{sec:method}).
Results are reported in Table~\ref{tab:results} and Fig.~\ref{fig:scatter_results}.

\begin{table}[!ht]
  \centering
  \caption{Average error metrics under spurious-proxy shift (MB-invariance holds) for the motivating example.}
  \label{tab:results}
  \resizebox{.5\columnwidth}{!}{%
  \begin{tabular}{cccc}
    \toprule
     \textbf{Method} & \textbf{MAE} & \textbf{RMSE} & \(\mathbf{R}^2\) \\
    \midrule
     Markov Blanket GIB       & \textbf{0.82} & \textbf{1.02} & \textbf{0.82} \\
      Global GIB               & 7.96 & 10.03 & -15.89 \\
    \bottomrule
  \end{tabular}%
  }
\end{table}

\begin{figure}[ht]
  \centering
  \subfloat[MB GIB\label{fig:mb-shift}]{
    \includegraphics[width=0.32\linewidth]{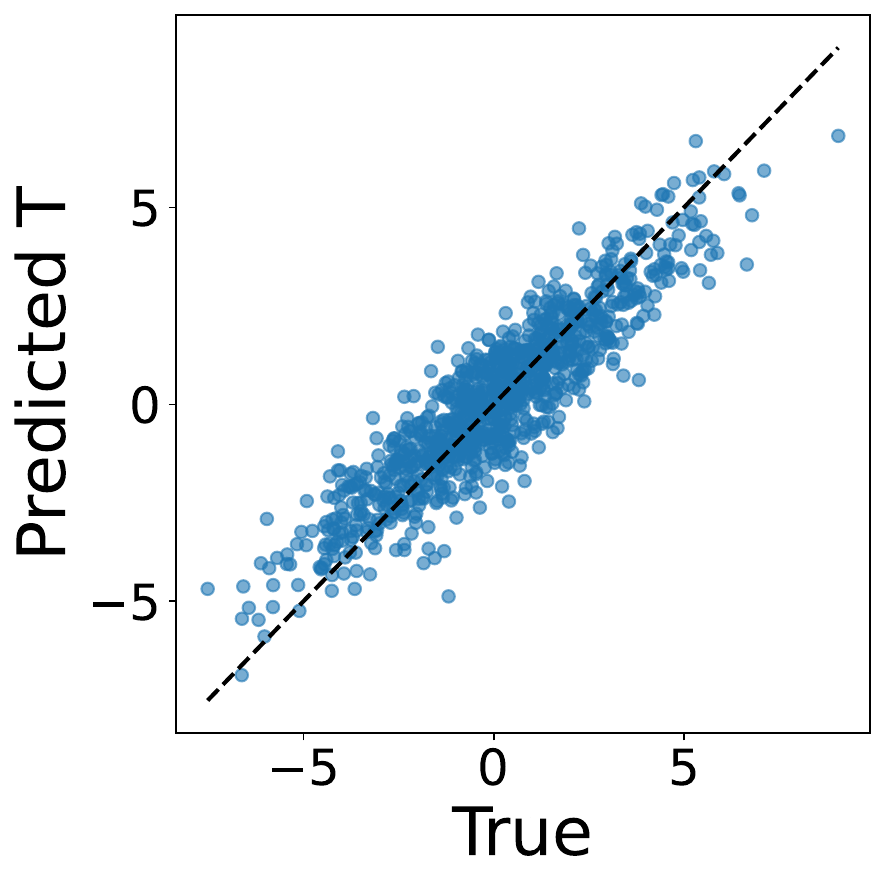}
  }%
  \subfloat[Global GIB\label{fig:glob-shift}]{
    \includegraphics[width=0.32\linewidth]{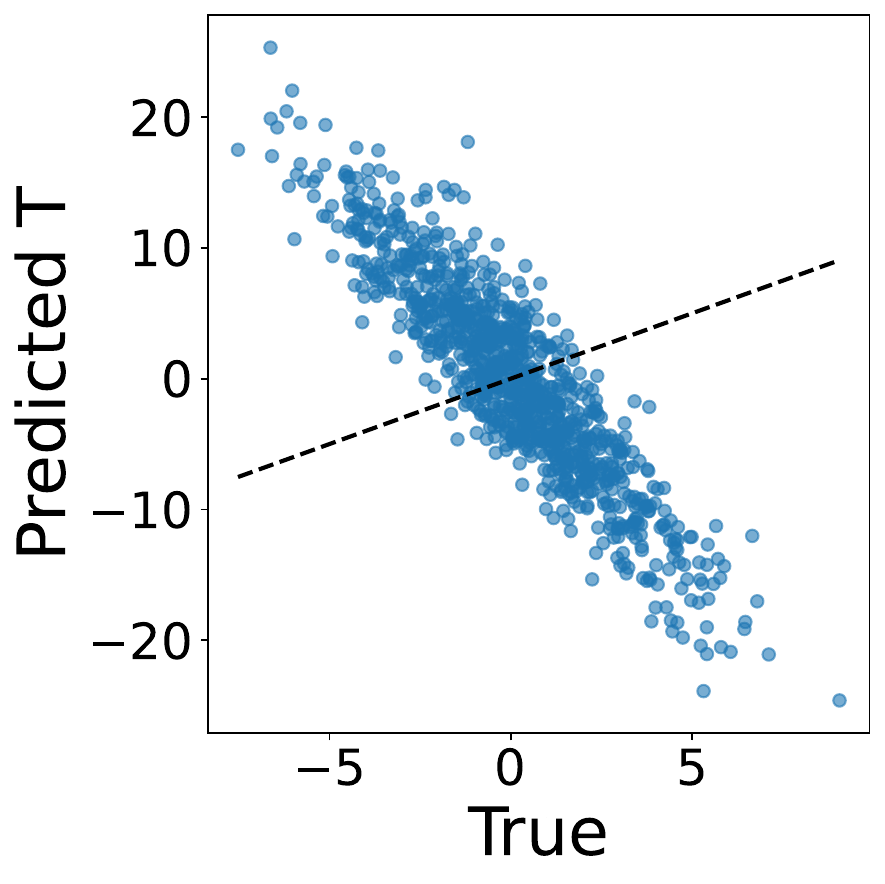}
  }
  \caption{True vs.\ predicted (imputed) \(T\) under the spurious-proxy shift.
  The Markov-blanket GIB remains accurate because it encodes only \(C=\mathrm{MB}(T)\), for which
  \(p_s(T\mid C)=p_t(T\mid C)\) holds exactly.
  In contrast, the global GIB collapses: the high-dimensional proxy block \(S\) dominates the bottleneck in the source,
  but its relationship to \(T\) reverses in the target (sign flip), yielding anti-predictive representations and large negative \(R^2\).}
  \label{fig:scatter_results}
\end{figure}

\noindent\textbf{Discussion.}
This example highlights a sharp, theory-aligned separation between global and Markov-blanket scopes.
Because the target mechanism is invariant,
restricting the encoder to \(\mathrm{MB}(T)=\{C_1,\dots,C_k\}\) preserves precisely the information required for transfer,
and the MB GIB achieves strong target performance (\(R^2\approx 0.82\)).
By contrast, the global encoder has access to a large set of non-blanket variables \(S\) that are highly predictive in the source
yet shift substantially across domains; under a tight bottleneck, the global GIB preferentially compresses these dominant proxy directions,
leading to systematic failure in the target (RMSE \(10.03\), \(R^2=-15.89\)).
Taken together, the results demonstrate that a \emph{DAG-aware Information Bottleneck} with a Markov-blanket encoder
can provide a principled and practically robust solution to domain adaptation, precisely in the regime where the MB-invariance guarantee applies.

\noindent We summarize our main contributions as follows:

\begin{itemize}[leftmargin=*]

\item \textbf{Mechanism-stable representations (Section \ref{sec:related} and \ref{sec:method}).} We propose a \emph{DAG-aware Information Bottleneck} that learns a low-dimensional summary ($U$) aligned with the causal structure, preserving information necessary for the target variable ($T$) while suppressing shift-prone nuisance variation.

\item \textbf{DAG-guided encoders (Section \ref{sec:related} and \ref{sec:method}).} We instantiate a practical encoder, \emph{Markov-blanket}, that leverage known graph structure (i.e., Markov blanket of $T$) to target mechanism-stable pathways and avoid unstable paths.

\item \textbf{Theory.} We provide formal guarantees in the Gaussian setting  (Section \ref{subsec:mbgib}) and distribution-agnostic justification for nonlinear/non-Gaussian regimes  (Section \ref{subsec:mbvib}), supporting the stability of the learned representations across domains.

\item \textbf{Empirics (Section \ref{sec:results}).} On synthetic and real datasets, the proposed method achieves accurate imputations under severe shifts, substantially outperforming a fit-on-source Bayesian network baseline; the Markov-blanket encoder offers a small but consistent edge when the prior on ($T$) is perturbed.

\end{itemize}

\section{Related Work}\label{sec:related}

\paragraph{Origins and Early Development of the Information Bottleneck}
The Information Bottleneck (IB) was introduced by Tishby, Pereira, and Bialek (1999) \cite{tishby2000information} as a principled way to extract task-relevant structure from data: learn a stochastic code (T) of (X) that maximizes mutual information with a target (Y) while compressing information about (X). Early work established the Lagrangian trade-off \(I(X;T)-\beta I(T;Y)\), the self-consistent IB equations, and the “information curve” characterizing relevance vs. compression. Soon after, efficient clustering algorithms appeared—most notably the Agglomerative IB (AIB) of Slonim and Tishby—which applied IB to unsupervised/text settings and gave scalable heuristics for discrete variables \cite{slonim1999agglomerative,slonim2001agglomerative}. On the theory side, Gaussian IB \cite{chechik2003information} provided closed-form solutions for jointly Gaussian variables and linked IB to canonical correlation analysis and rate–distortion theory. Variants such as multi-variate IB \cite{slonim2006multivariate} and deterministic annealing schemes broadened applicability, and by the mid-2000s IB had become a unifying lens for representation learning, lossy compression with task-aware distortions, and information-theoretic clustering—setting the stage for later variational/deep renditions.

\paragraph{From Theory to Deep Learning: The Variational Turn of IB}
The next wave of IB research made the framework practical for high-dimensional data by introducing variational surrogates and neural parameterizations. The \emph{Variational Information Bottleneck} casts IB as a stochastic encoder–decoder trained with the reparameterization trick, optimizing a predictive loss plus a KL penalty to a simple prior; the trade-off coefficient \(\beta\) becomes a tunable “information budget” \cite{alemi2016deep}. This bridged IB with modern generative modeling—most visibly \(\beta\)-VAE as an unsupervised, rate–distortion analogue \cite{burgess2018understanding}—and enabled end-to-end training on images, text, and speech. In parallel, deterministic and annealed variants (e.g., the Deterministic Information Bottleneck) and “information plane” studies explored how compression emerges during deep training, sparking debate about generalization and the role of noise \cite{strouse2017deterministic,tishby2015deep}. Subsequent work broadened IB’s scope: nonlinear/non-Gaussian decoders and tighter variational bounds on mutual information \cite{poole2019variational}, connections to dropout and noise injection as implicit bottlenecks \cite{achille2018emergence}, and task-aware objectives such as the Conditional Entropy Bottleneck for robust prediction \cite{fischer2020conditional}.

\paragraph{IB and Causal Inference.}
Recent work integrates the Information Bottleneck with causal goals to obtain representations that are stable under interventions and distribution shift. A formal ``causal IB'' reframes sufficiency in interventional terms, proposing an objective that \emph{compresses} $X$ while \emph{preserving causal control} of a target $Y$—yielding a CIB Lagrangian and an axiomatic notion of optimal causal variable abstractions that can be reasoned about with backdoor adjustment and without full DAG knowledge \cite{Simoes2025CIB}. Beyond interventional sufficiency, causal IB ideas have been used to \emph{separate robust (causal) from non-robust (spurious/style)} features via instrumental-variable style interventions, improving adversarial robustness by down-weighting non-causal signal within the bottleneck \cite{hua2022causal}. For domain generalization, \emph{Invariant Information Bottleneck} (IIB) casts invariant causal prediction in mutual-information terms and regularizes $I(Z;X)$ to avoid pseudo-invariant and geometrically skewed shortcuts, delivering tractable variational losses and consistent OOD gains \cite{li2022invariant}. Together, these strands position IB as a bridge between predictive compression and causal transportability: learn minimal, interventionally sufficient codes that transfer across environments while suppressing spurious variability.

\paragraph{How our MB--GIB and MB--VIB differ.}
Our approach is \emph{DAG-aware and node-specific}: we place the bottleneck downstream of the \emph{Markov blanket of the target} $T$ and learn summaries solely from $\mathrm{MB}(T)$ (or parents) to impute/predict $T$ under shift. In the linear--Gaussian case, MB--GIB is a \emph{closed-form} IB/CCA solution with an explicit blanket constraint; MB--VIB is its \emph{nonlinear, non-Gaussian} variational counterpart (stochastic encoder; Student-$t$/Laplace heads). This differs from \emph{Causal IB} \cite{Simoes2025CIB}, which optimizes an interventional sufficiency objective to produce \emph{causal abstractions} (possibly without full DAG knowledge) aimed at preserving \emph{causal control} rather than enforcing a fixed blanket restriction. It also differs from adversarial-robustness uses of CIB \cite{hua2022causal}, which introduce IV-style interventions/adversarial criteria to separate robust from non-robust features, whereas we exploit \emph{known causal structure} to preclude non-blanket pathways a priori. Finally, unlike \emph{Invariant Information Bottleneck} \cite{li2022invariant}, which regularizes mutual information to induce environment-level invariance (often requiring multiple environments or environment labels), our invariance comes from \emph{structural} restriction: by confining the encoder to $\mathrm{MB}(T)$, we keep mechanism-stable, $T$-relevant signals and discard context- or downstream-induced variability. Practically, this yields a simple pipeline: (i) graph-constrained encoder (MB or parents), (ii) closed-form GIB or variational VIB training on a \emph{single} source environment, and (iii) zero-shot deployment to shifted targets, without interventional data, adversarial training, or multi-environment supervision.

\paragraph{When MB--GIB / MB--VIB are most useful.}
Our DAG-aware bottlenecks are most powerful when a (partial) graph around the target $T$ is known, training comes from a \emph{single} source environment, and deployment faces \emph{contextual} shifts while mechanisms are stable. They are especially effective when $T$ is unobserved at test time, data are limited (closed-form MB--GIB is sample-efficient), or robustness is critical: by restricting the encoder to $\mathrm{MB}(T)$, they transfer zero-shot, avoid spurious downstream paths, and require neither interventional data nor multi-environment supervision.

\section{Problem Statement}\label{sec:problem}

We formalize domain adaptation for imputing a \emph{target variable} $T=X_t$ across environments when only \emph{local causal information} around $T$ is available and data may be non-Gaussian.

\paragraph{Data and observables.}
Let $X=(X_1,\ldots,X_p)$ be $p$ observed variables. We observe a \emph{source} dataset with $T$ present,
\[
  \mathcal{D}_{\mathrm{s}}=\{X_{\mathrm{s}}^{(i)}\}_{i=1}^{N_{\mathrm{s}}},\qquad
  X_{\mathrm{s}}^{(i)}=(X_{1,\mathrm{s}}^{(i)},\ldots,X_{p,\mathrm{s}}^{(i)}),
\]
and a \emph{target} dataset in which $T$ is entirely missing,
\[
  \mathcal{D}_{\mathrm{t}}^{\mathrm{obs}}=\{X_{\mathrm{t},-t}^{(j)}\}_{j=1}^{N_{\mathrm{t}}},\qquad
  X_{\mathrm{t},-t}^{(j)}=(X_{1,\mathrm{t}}^{(j)},\ldots,X_{t-1,\mathrm{t}}^{(j)},X_{t+1,\mathrm{t}}^{(j)},\ldots,X_{p,\mathrm{t}}^{(j)}).
\]

\paragraph{Graphical knowledge: Markov blanket of $T$.}
We \emph{do not assume} the full DAG is known. Instead, we assume access to (or can reliably estimate) the \emph{Markov blanket} of $T$ in some underlying DAG $\mathcal{G}$:
\[
  \MB(T)\;=\;\Pa(T)\ \cup\ \Ch(T)\ \cup\ \bigcup_{C\in\Ch(T)}\!\!\Pa(C)\ \setminus\{T\},
\]
i.e., the union of $T$'s parents, its children, and the parents of those children (spouses).
By the local Markov property, $T \perp X\setminus(\MB(T)\cup\{T\})\mid \MB(T)$. In words, conditioning on $\MB(T)$ renders $T$ independent of all remaining nodes.

\paragraph{Model class (beyond Gaussian).}
We allow \emph{arbitrary} (possibly nonlinear, non-Gaussian) structural mechanisms and noise distributions; no linearity or Gaussianity is required. Throughout, we make no parametric restrictions on the joint distribution other than the blanket conditional independence above.

\paragraph{Shift model (MB invariance).}
Source and target environments may differ (covariate/target shifts) in the marginals of exogenous or downstream variables and in joint correlations outside the blanket. Our sole stability assumption is \emph{MB invariance}:
\[
  p_{\mathrm{s}}(T\mid \MB(T))\;=\;p_{\mathrm{t}}(T\mid \MB(T)),
\]
i.e., the conditional law of $T$ given its blanket is unchanged across domains. Under this assumption, shifts outside $\MB(T)$ cannot alter $p(T\mid \MB(T))$ by conditional independence; shifts \emph{within} $\MB(T)$ may occur in the marginals of blanket variables but not in the mechanism linking $T$ to its blanket.

\paragraph{Task.}
Let $M:=\MB(T)$ be the index set of blanket variables, and write $X_M=(X_k)_{k\in M}$.
For the $j$-th target sample, denote its blanket subvector by
$x^{(j)}_{M,\mathrm{t}} := (X_{k,\mathrm{t}}^{(j)})_{k\in M}$.
Given $(\mathcal{D}_{\mathrm{s}},\mathcal{D}_{\mathrm{t}}^{\mathrm{obs}}, M)$ and the MB-invariance assumption
$p_{\mathrm{s}}(T\mid X_M)=p_{\mathrm{t}}(T\mid X_M)$, we impute the missing targets via the
(source-trained) Bayes predictor
\[
  \widehat{T}^{(j)}
  \;=\;
  \mathbb{E}_{p_{\mathrm{s}}}\!\big[\,T \,\big|\, X_M = x^{(j)}_{M,\mathrm{t}}\big],
  \qquad j=1,\ldots,N_{\mathrm{t}}.
\]
The equality of conditionals ensures \emph{zero-shot transfer} from source to target.
Practically, we estimate this conditional with a \emph{DAG-aware Information Bottleneck} restricted
to inputs $X_S\in\MB(T)$: a closed-form, sample-efficient linear MB--GIB, and a
nonlinear MB--VIB (stochastic encoder/decoder) for non-Gaussian, nonlinear settings.

\section{Methodology and Theoretical Results}\label{sec:method}
We develop a \emph{DAG-aware Information Bottleneck} pipeline tailored to imputing an unobserved target $T$ under domain shift using only local causal knowledge. Our methodology enforces a structural input constraint—encoders see either the parents $\Pa(T)$ or the Markov blanket $\MB(T)$—and exploits the \emph{MB-invariance} assumption $p_{\mathrm{s}}(T\!\mid\!X_{\MB})=p_{\mathrm{t}}(T\!\mid\!X_{\MB})$ to enable zero-shot transfer. Concretely, we present (i) a closed-form \textbf{MB--GIB} for the linear–Gaussian case, where the GIB subspace equals CCA directions and we prove that restricting to $\MB(T)$ is \emph{lossless} relative to using all non-$T$ variables; and (ii) a \textbf{MB--VIB} with stochastic encoders and flexible (Laplace/Student-$t$) decoders for nonlinear, non-Gaussian settings, trained by a predictive loss plus a KL compression term. On the theory side, we show (a) equivalence of global and MB-restricted GIB spectra in the Gaussian case and a block-matrix proof that all optimal directions lie in the lifted $\MB(T)$ subspace; (b) identifiability of the population predictor $\mathbb{E}[T\!\mid\!X_{\MB}]$ from source data and its risk preservation in the target domain under MB-invariance; and (c) finite-sample guarantees that the IB estimators concentrate around the population conditional, with robustness to shifts outside the blanket. Together, these results justify using $\MB(T)$ as a \emph{structurally minimal} and \emph{transfer-stable} interface for bottleneck learning, and they motivate our practical algorithms that remain sample-efficient (MB--GIB) yet expressive (MB--VIB) across a wide range of data regimes.




\subsection{MB--GIB: Closed-Form Solution and Lossless Restriction (Gaussian)}\label{subsec:mbgib}

In the linear–Gaussian case, whitening $X_S$ and $T$ turns the IB tradeoff into finding directions in $X_S$ that are \emph{maximally correlated} with $T$. This is exactly canonical correlation analysis (CCA): solve a symmetric eigenproblem for the operator
\(
\Omega_{X_S}=\Sigma_{X_SX_S}^{-1/2}\Sigma_{X_ST}\Sigma_{TT}^{-1}\Sigma_{TX_S}\Sigma_{X_SX_S}^{-1/2},
\)
take its top $d$ eigenvectors, unwhiten to get $W$, and encode $Z=W^\top X_S$. Because the model is Gaussian and linear, these $Z$ are \emph{sufficient statistics} for predicting $T$: no other linear combination of $X_S$ carries additional predictive information once $Z$ is known. The decoder is just the optimal linear predictor of $T$ from $Z$ (ordinary least squares): $\hat T=B^\top Z$ with $B=(Z^\top Z)^{-1}Z^\top T$. The IB parameter $\beta$ acts as a \emph{spectral threshold}: only CCA directions with squared canonical correlation above a $\beta$-dependent cutoff are kept (equivalently, choose $d$ to retain the leading spectrum). Under the Markov-blanket restriction $X_S=\MB(T)$, the nonzero spectrum matches the global one, so this closed-form encoder/decoder is \emph{lossless} relative to using all non-$T$ variables.

\paragraph{Lossless restriction theorem (informal).}
Let $X=(X_{-t},T)$ and $M=\MB(T)$. In the Gaussian case, assuming $T\!\perp\! X\setminus(M\cup\{T\})\mid M$, the nonzero spectra of the global CCA operator $\Omega_X$ and the blanket operator $\Omega_M$ \emph{coincide}, and every global CCA direction lies in the lifted subspace generated by $M$. Hence restricting the encoder to $\MB(T)$ is \emph{without loss} for any bottleneck dimension $d$.

\emph{Proof sketch.} Using the Markov-blanket factorization $\Sigma_{XT}=L\,\Sigma_{MT}$ with $L=[I;\,B]$, one shows $L^\top\Sigma_{XX}^{-1}L=\Sigma_{MM}^{-1}$. This yields $S_X = Q\,S_M$ with $Q$ column-orthonormal, so $S_X$ and $S_M$ have identical singular values, implying equal nonzero spectra and lifted eigenvectors (details in the supplement).

\paragraph{Practical recipe and cost.}
Compute empirical covariances on source $(X_S,T)$, form $\Omega_{X_S}$, take its top $d$ eigenvectors to obtain $W$, and fit $B$ by least squares from $Z$ to $T$. Complexity is $O(n|X_S|^2 + |X_S|^3)$, typically small since $|X_S|=|M|$. At test time, encode $z=W^\top x_{S,\mathrm{t}}$ and predict $\hat T=B^\top z$.

\subsection{MB--VIB: Nonlinear, Non-Gaussian Bottleneck (Practical Variant)}\label{subsec:mbvib}

When relationships are nonlinear or noises are non-Gaussian, we use a variational bottleneck restricted to $X_S\in\MB(T)$. The encoder is a stochastic map
$q_\phi(z\mid x_S)$ (neural mean/variance with reparameterization), the decoder a flexible likelihood
$q_\theta(t\mid z)$ (Gaussian/Laplace/Student-$t$ as appropriate). We train on source pairs $(x_S,t)$ by minimizing
\[
\begin{aligned}
\mathcal{L}_{\mathrm{VIB}}(\phi,\theta)
&=\mathbb{E}_{(x_S,t)}\ \mathbb{E}_{z\sim q_\phi(\cdot\mid x_S)}\!\big[-\log q_\theta(t\mid z)\big] \\
&\quad +\; \beta\,\mathbb{E}_{x_S}\,D_{\mathrm{KL}}\!\big(q_\phi(z\mid x_S)\,\|\,r(z)\big).
\end{aligned}
\]

with a simple prior $r(z)$ (e.g., $\mathcal N(0,I)$). The first term fits a predictive decoder for $T$; the KL controls information $I(X_S;Z)$ and thus shrinks nuisance variability that does not help predict $T$. The DAG-aware input restriction prevents leakage from non-blanket pathways, aligning the learned representation with the transfer-stable mechanism $p(T\mid X_M)$.

\textit{Design choices.} (i) Likelihood: choose $q_\theta$ to match the target type (e.g., Student-$t$ for heavy tails); (ii) Capacity: adjust $z$-dim and $\beta$ to trade accuracy for robustness; (iii) Stability: standardization, early stopping, and prior tempering improve optimization. \textit{Deployment} is simple: encode $z\!=\!f_\phi(x_{S,\mathrm{t}})$ from target inputs restricted to $X_S$, then output $\hat T=\mathbb{E}[T\mid z]$ (regression) or $\arg\max_y q_\theta(y\mid z)$ (classification). The blanket restriction prevents leakage from non-blanket pathways, while the variational bottleneck captures nonlinear predictors that remain stable under shifts outside $\MB(T)$.

\subsection{Transfer Guarantees under MB Invariance}\label{subsec:transfer}

We give high-level guarantees for blanket-restricted IB predictors; proofs and regularity conditions appear in the supplement.

\paragraph{Identifiability (population).}
Let $M=\MB(T)$ and assume $p_{\mathrm{s}}(T\mid X_M)=p_{\mathrm{t}}(T\mid X_M)$. Then the Bayes rule
\[
g^\star(x_M)\;=\;\mathbb{E}_{p_{\mathrm{s}}}[T\mid X_M=x_M]\;=\;\mathbb{E}_{p_{\mathrm{t}}}[T\mid X_M=x_M]
\]
is identifiable from source data alone and is target-optimal conditional on $X_M$.

\paragraph{Risk preservation (zero-shot).}
For squared or log loss, any estimator $\widehat g$ trained on source pairs $(X_M,T)$ and applied to target inputs satisfies
\[
\mathcal{R}_{\mathrm{t}}(\widehat g)-\mathcal{R}_{\mathrm{t}}(g^\star)
\;=\;
\mathcal{R}_{\mathrm{s}}(\widehat g)-\mathcal{R}_{\mathrm{s}}(g^\star),
\]
so source excess risk transfers verbatim to the target. In particular:
(i) \emph{MB--GIB losslessness} in the Gaussian/linear case (Sec.~\ref{subsec:mbgib}) shows restricting inputs to $X_M$ is without loss for any bottleneck dimension; 
(ii) \emph{MB--VIB consistency} (Sec.~\ref{subsec:mbvib}) implies target consistency whenever source risk vanishes; and 
(iii) by $T\!\perp\!X\setminus(M\cup\{T\})\mid X_M$, shifts outside $M$ cannot change the optimal predictor.

\paragraph{Practical Considerations}
(1) \emph{Checking MB invariance.} Compare source vs.\ target residuals or decoder log-likelihoods of $\hat g(X_{M,\mathrm{t}})$; systematic shifts flag drift in $p(T\mid X_M)$. 
(2) \emph{Estimating $M$ when unknown.} Run local discovery around $T$ (parents/children/spouses) and prune via conditional-independence tests on the source; validate by goodness-of-fit of $p_{\mathrm{s}}(T\mid X_M)$ and stability of residuals in target. 
(3) \emph{Tuning.} Increase $\beta$ and decrease $d_z$ for robustness; select decoder likelihood to match tails (Gaussian/Laplace/Student-$t$).

\subsection{Finite-Sample Guarantees and Robustness}\label{subsec:finite}

We now make the finite-sample and robustness claims in Secs.~\ref{subsec:mbgib}--\ref{subsec:transfer} explicit.
The goal is twofold: (i) quantify how the \emph{MB--GIB} estimator (which is computed from empirical covariances)
concentrates around its population solution (equivalently, around the population conditional \(g^\star(x_M)=\mathbb E[T\mid X_M=x_M]\));
and (ii) formalize robustness under distribution shift, clarifying which guarantees require exact MB invariance and which extend
to small violations.

\paragraph{Setup and notation.}
Let \(M=\MB(T)\) and denote \(X_M\in\mathbb{R}^{p}\) with \(p:=|M|\).
We observe i.i.d.\ \emph{source} samples \(\{(X_{M,i},T_i)\}_{i=1}^n \sim p_{\mathrm{s}}(X_M,T)\).
Let \(Z:=(X_M^\top,T)^\top\in\mathbb{R}^{p+1}\) with population covariance
\[
\Sigma:=\mathrm{Cov}(Z)
=
\begin{pmatrix}
\Sigma_{XX} & \Sigma_{XT}\\
\Sigma_{TX} & \Sigma_{TT}
\end{pmatrix},
\qquad
\widehat\Sigma:=\mathrm{Cov}_n(Z)
\]
the corresponding sample covariance (with delta degrees of freedom \(ddof=1\)).
In the Gaussian/linear regime of Sec.~\ref{subsec:mbgib}, the MB--GIB directions are the top eigenvectors of the CCA operator
\[
\Omega_M \;:=\; \Sigma_{XX}^{-1/2}\Sigma_{XT}\Sigma_{TT}^{-1}\Sigma_{TX}\Sigma_{XX}^{-1/2}\in\mathbb{R}^{p\times p},
\]
and the empirical estimator replaces \(\Sigma\) by \(\widehat\Sigma\), yielding \(\widehat\Omega_M\).
Let \(U_\star\in\mathbb{R}^{p\times d}\) collect the top \(d\) eigenvectors of \(\Omega_M\), and \(\widehat U\) those of \(\widehat\Omega_M\).
We measure subspace error using the principal-angle quantity \(\|\sin\Theta(\widehat U,U_\star)\|_{\mathrm{op}}\),
and write \(\Pi_\star=U_\star U_\star^\top\), \(\widehat\Pi=\widehat U\widehat U^\top\) for the projectors.

\paragraph{Assumptions (finite-sample MB--GIB).}
The following conditions are standard in non-asymptotic covariance and spectral analysis, and they are satisfied by the linear--Gaussian SEMs
used in our experiments.

\begin{assumption}[Joint sub-Gaussian source law]\label{ass:subg}
Let $Z:=(X_M^\top,T)^\top\in\mathbb{R}^{p+1}$ denote the source random vector with $\mathbb{E}(Z)=0$ and covariance $\Sigma$.
We assume $Z$ is $K$-sub-Gaussian in the standard (joint) sense:
\[
\|Z\|_{\psi_2}\;:=\;\sup_{\|u\|_2=1}\|u^\top Z\|_{\psi_2}\;\le\;K,
\]
where $\|\cdot\|_{\psi_2}$ is the sub-Gaussian Orlicz norm.
(In particular, the linear--Gaussian SEM case satisfies this condition.)
\end{assumption}

\begin{assumption}[Blanket covariance conditioning]\label{ass:cond}
The blanket covariance is well-conditioned: \(\lambda_{\min}(\Sigma_{X_M X_M})\ge \lambda_0>0\) and
\(\lambda_{\max}(\Sigma_{X_M X_M})\le \Lambda_0<\infty\).
\end{assumption}

\begin{assumption}[Spectral gap for the retained CCA spectrum]\label{ass:gap}
Let \(\lambda_1\ge\cdots\ge\lambda_p\ge 0\) be the eigenvalues of \(\Omega_M\).
For the chosen bottleneck dimension \(d\), there is an eigengap \(\gamma:=\lambda_d-\lambda_{d+1}>0\) (with \(\lambda_{p+1}:=0\)).
\end{assumption}

\medskip
\noindent\textbf{(A) Concentration of empirical covariances.} Theorem~\ref{thm:cov_conc} quantifies estimation error of the covariance blocks used by MB--GIB.
\begin{theorem}[Covariance concentration]\label{thm:cov_conc}
Under Assumption~\ref{ass:subg}, for any \(\delta\in(0,1)\), with probability at least \(1-\delta\),
\[
\|\widehat\Sigma-\Sigma\|_{\mathrm{op}}
\;\le\;
c\,K^2\!\left(\sqrt{\frac{p+1+\log(1/\delta)}{n}}+\frac{p+1+\log(1/\delta)}{n}\right),
\]
where \(c>0\) is a universal constant.
In particular, the same bound holds for each block \(\widehat\Sigma_{XX},\widehat\Sigma_{XT},\widehat\Sigma_{TT}\).
\end{theorem}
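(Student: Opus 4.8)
The statement is the textbook non-asymptotic bound on $\|\widehat\Sigma-\Sigma\|_{\mathrm{op}}$ for i.i.d.\ sub-Gaussian vectors, so the plan is a covering-number argument combined with a one-dimensional Bernstein tail bound, followed by a transfer to the blocks via monotonicity of the operator norm under coordinate restriction. First I would reduce to the zero-mean, $1/n$-normalized estimator. Since $\mathbb{E}(Z)=0$ by Assumption~\ref{ass:subg}, set $\widetilde\Sigma:=\tfrac1n\sum_{i=1}^n Z_iZ_i^\top$, so that the $ddof=1$ sample covariance is $\widehat\Sigma=\tfrac{n}{n-1}\bigl(\widetilde\Sigma-\bar Z\bar Z^\top\bigr)$ with $\bar Z:=\tfrac1n\sum_i Z_i$. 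A triangle inequality gives
\[
\|\widehat\Sigma-\Sigma\|_{\mathrm{op}}\;\le\;\|\widetilde\Sigma-\Sigma\|_{\mathrm{op}}+\tfrac{1}{n-1}\|\widetilde\Sigma\|_{\mathrm{op}}+\tfrac{n}{n-1}\|\bar Z\|_2^2,
\]
and since $\|\bar Z\|_2\lesssim K\sqrt{(p+1+\log(1/\delta))/n}$ with probability $\ge 1-\delta/3$ (same sub-Gaussian tail, controlled on a net of the sphere), and $\|\widetilde\Sigma\|_{\mathrm{op}}\le \|\widetilde\Sigma-\Sigma\|_{\mathrm{op}}+\|\Sigma\|_{\mathrm{op}}$, the last two terms are dominated by the right-hand side of the claimed bound once $\|\widetilde\Sigma-\Sigma\|_{\mathrm{op}}$ is controlled. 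So it suffices to bound $\|\widetilde\Sigma-\Sigma\|_{\mathrm{op}}$.

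Next, because $\widetilde\Sigma-\Sigma$ is symmetric, $\|\widetilde\Sigma-\Sigma\|_{\mathrm{op}}=\sup_{u\in S^{p}}|u^\top(\widetilde\Sigma-\Sigma)u|$, where $S^{p}\subset\mathbb R^{p+1}$ is the unit sphere; fixing a $\tfrac14$-net $\mathcal N$ of $S^{p}$ with $|\mathcal N|\le 9^{\,p+1}$ yields the standard comparison $\|\widetilde\Sigma-\Sigma\|_{\mathrm{op}}\le 2\max_{u\in\mathcal N}|u^\top(\widetilde\Sigma-\Sigma)u|$. For a \emph{fixed} unit vector $u$, Assumption~\ref{ass:subg} makes $u^\top Z_i$ $K$-sub-Gaussian, hence $(u^\top Z_i)^2-\mathbb{E}[(u^\top Z_i)^2]$ is centered and sub-exponential with $\psi_1$-norm at most $cK^2$; writing $u^\top(\widetilde\Sigma-\Sigma)u=\tfrac1n\sum_{i=1}^n\bigl((u^\top Z_i)^2-\mathbb{E}[(u^\top Z_i)^2]\bigr)$ and applying Bernstein's inequality for sub-exponential averages gives, for each $t>0$,
\[
\mathbb{P}\Bigl(|u^\top(\widetilde\Sigma-\Sigma)u|\ge t\Bigr)\;\le\;2\exp\!\Bigl(-c\,n\,\min\bigl(t^2/K^4,\;t/K^2\bigr)\Bigr).
\]
Union-bounding over $\mathcal N$ and requiring the total failure probability to be at most $\delta/3$ forces $c\,n\,\min(t^2/K^4,t/K^2)\ge (p+1)\log 9+\log(6/\delta)$; solving the two regimes separately (the $t^2/K^4$ regime supplies the $\sqrt{\cdot}$ term, the $t/K^2$ regime the linear term) yields $t\le C K^2\bigl(\sqrt{(p+1+\log(1/\delta))/n}+(p+1+\log(1/\delta))/n\bigr)$. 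Multiplying by the factor $2$ from the net comparison and absorbing $\log 9$ and the constants $C,3,6$ into a single universal constant $c$ gives the claimed bound for $\|\widetilde\Sigma-\Sigma\|_{\mathrm{op}}$, and combining with the first step (union of the three events) gives it for $\|\widehat\Sigma-\Sigma\|_{\mathrm{op}}$.

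Finally, the per-block statement is immediate: $\widehat\Sigma_{XX}-\Sigma_{XX}=P(\widehat\Sigma-\Sigma)P^\top$ for the coordinate projection $P$ onto the $X_M$-block, while $\widehat\Sigma_{XT}-\Sigma_{XT}$ and $\widehat\Sigma_{TT}-\Sigma_{TT}$ are, respectively, an off-diagonal submatrix and a $1\times1$ submatrix of $\widehat\Sigma-\Sigma$; since $\|PAP^\top\|_{\mathrm{op}}\le\|A\|_{\mathrm{op}}$ and the largest singular value of any submatrix is at most that of the ambient matrix, each block inherits the same bound with the same constant. I do not expect a genuine obstacle here—the argument is entirely standard—but the step requiring the most care is the clean passage from the two-regime Bernstein tail and the $9^{p+1}$ union bound to the exact $\sqrt{\cdot}+(\cdot)$ form stated (correctly matching the sub-Gaussian and sub-exponential regimes and folding $\log 9$ into the universal constant), together with verifying that the empirical-mean subtraction and the $ddof=1$ normalization contribute only terms of order $K^2(p+1+\log(1/\delta))/n$, i.e.\ genuinely lower order than the leading $\sqrt{\cdot}$ term.
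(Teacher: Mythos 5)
Your proposal is correct and follows the same overall skeleton as the paper's proof: reduce to the uncentered second-moment estimator, argue that the $\bar Z\bar Z^\top$ correction (and, in your case, the $n/(n-1)$ factor from $ddof=1$) contributes only a lower-order $O\!\bigl(K^2(p+1+\log(1/\delta))/n\bigr)$ term, bound the uncentered deviation in operator norm, and transfer to the blocks by monotonicity of $\|\cdot\|_{\mathrm{op}}$ under coordinate restriction and padding. The one place you take a genuinely different route is the core concentration step: the paper simply invokes a textbook non-asymptotic covariance estimation theorem for non-isotropic sub-Gaussian vectors (Vershynin, Theorem~4.7.1), whereas you unroll its proof --- write $u^\top(\widetilde\Sigma-\Sigma)u$ as an average of centered sub-exponential variables with $\psi_1$-norm $\lesssim K^2$, apply Bernstein to get the two-regime tail, then union-bound over a $\tfrac14$-net of size $9^{p+1}$ and fold $\log 9$ into the universal constant. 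Your version buys self-containedness and makes the $K^2$ scaling and the origin of the $\sqrt{\cdot}+(\cdot)$ form explicit, at the cost of length; the paper's is terser but delegates those checks to the cited theorem. You are also a bit more careful than the paper about the $ddof=1$ rescaling, which the paper's proof glosses over (it only discusses the $\bar Z\bar Z^\top$ centering term, not the $n/(n-1)$ factor). No gaps.
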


\medskip
\noindent\textbf{(B) Finite-sample stability of the MB--GIB spectrum.} Theorem~\ref{thm:gib_subspace} translates covariance error into \emph{spectral} (subspace) error for the CCA/GIB directions,
with the eigengap \(\gamma\) controlling sensitivity.
\begin{theorem}[MB--GIB spectral/subspace concentration]\label{thm:gib_subspace}
Assume Assumptions~\ref{ass:subg}--\ref{ass:gap}.
Let \(U_\star\) and \(\widehat U\) be the population and empirical top-\(d\) eigenspaces of \(\Omega_M\) and \(\widehat\Omega_M\), respectively.
Then, with probability at least \(1-\delta\),
\[
\|\sin\Theta(\widehat U,U_\star)\|_{\mathrm{op}}
\;\le\;
\frac{C}{\gamma}\,\|\widehat\Omega_M-\Omega_M\|_{\mathrm{op}}
\;\le\;
\frac{C'}{\gamma}\,\|\widehat\Sigma-\Sigma\|_{\mathrm{op}}
\;\lesssim\;
\frac{K^2}{\gamma}\sqrt{\frac{p+\log(1/\delta)}{n}},
\]
where \(C,C'>0\) depend only on the conditioning constants \(\lambda_0,\Lambda_0\) (Assumption~\ref{ass:cond}).
Equivalently, \(\|\widehat\Pi-\Pi_\star\|_{\mathrm{op}}\le 2\|\sin\Theta(\widehat U,U_\star)\|_{\mathrm{op}}\).
\end{theorem}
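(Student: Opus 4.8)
\emph{Proof plan.} I would prove the displayed chain as three links — subspace error $\lesssim$ operator error $\lesssim$ covariance error $\lesssim$ rate — and then compose them on one high-probability event. For the first link, both $\Omega_M$ and $\widehat\Omega_M$ are symmetric positive semidefinite $p\times p$ matrices by construction, and Assumption~\ref{ass:gap} supplies a gap $\gamma=\lambda_d-\lambda_{d+1}>0$ isolating the retained top-$d$ block of $\Omega_M$ from the rest of its spectrum. A $\sin\Theta$ form of the Davis--Kahan--Wedin theorem (in a gap-only version that does not require the perturbation to be small relative to $\gamma$) then gives $\|\sin\Theta(\widehat U,U_\star)\|_{\mathrm{op}}\le(C/\gamma)\,\|\widehat\Omega_M-\Omega_M\|_{\mathrm{op}}$ with $C$ an absolute constant, and the claimed $\|\widehat\Pi-\Pi_\star\|_{\mathrm{op}}\le 2\|\sin\Theta(\widehat U,U_\star)\|_{\mathrm{op}}$ is the standard relation between principal angles of two $d$-dimensional subspaces and the operator-norm distance between their orthogonal projectors.

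\emph{Second link (the crux): $\Omega_M$ is Lipschitz in the covariance blocks.} Write $\Omega_M=A^{-1/2}BA_T^{-1}B^{\top}A^{-1/2}$ with $A=\Sigma_{X_M X_M}$, $B=\Sigma_{X_M T}$, $A_T=\Sigma_{TT}$, and $\widehat\Omega_M$ the same expression in the hatted blocks. I would first pass to the event $\mathcal E_0=\{\|\widehat\Sigma-\Sigma\|_{\mathrm{op}}\le\lambda_0/2\}$, which by Theorem~\ref{thm:cov_conc} has probability at least $1-\delta$ provided $n\gtrsim(K^4/\lambda_0^2)(p+\log(1/\delta))$. On $\mathcal E_0$, Weyl's inequality and the fact that every block is a compression of $\widehat\Sigma-\Sigma$ give $\lambda_{\min}(\widehat A)\ge\lambda_0/2$ (with an analogous lower bound for $\widehat A_T$ from $\Sigma_{TT}\succ 0$), while $\|A\|_{\mathrm{op}}\le\Lambda_0$ (Assumption~\ref{ass:cond}) and $\|B\|_{\mathrm{op}},\|A_T\|_{\mathrm{op}}\le\|\Sigma\|_{\mathrm{op}}\lesssim K^2$ (Assumption~\ref{ass:subg}), with the same bounds for the hatted quantities; I would also use the Schur-complement inequality $\widehat B\widehat A_T^{-1}\widehat B^{\top}\preceq\widehat A$. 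Then I telescope $\widehat\Omega_M-\Omega_M$ over its factors, one replacement at a time, using submultiplicativity and three estimates: (i) the resolvent identity $\widehat A_T^{-1}-A_T^{-1}=\widehat A_T^{-1}(A_T-\widehat A_T)A_T^{-1}$; (ii) the operator-Lipschitz bound $\|\widehat A^{-1/2}-A^{-1/2}\|_{\mathrm{op}}\le\tfrac12(\lambda_0/2)^{-3/2}\|\widehat A-A\|_{\mathrm{op}}$ for $t\mapsto t^{-1/2}$ on $[\lambda_0/2,\infty)$, obtained from the integral representation $t^{-1/2}=\tfrac1\pi\int_0^\infty(s+t)^{-1}s^{-1/2}\,ds$ and the resolvent identity under the integral; and (iii) $\max\{\|\widehat A-A\|_{\mathrm{op}},\|\widehat B-B\|_{\mathrm{op}},\|\widehat A_T-A_T\|_{\mathrm{op}}\}\le\|\widehat\Sigma-\Sigma\|_{\mathrm{op}}$. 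Summing the resulting terms yields $\|\widehat\Omega_M-\Omega_M\|_{\mathrm{op}}\le C''\|\widehat\Sigma-\Sigma\|_{\mathrm{op}}$ with $C''$ depending only on $\lambda_0,\Lambda_0$ (and the overall scale $\|\Sigma\|_{\mathrm{op}}\lesssim K^2$); absorbing $C''$ into $C'$ gives the middle inequality.

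\emph{Third link and assembly.} On the event of Theorem~\ref{thm:cov_conc} (intersected with $\mathcal E_0$; the total probability is still $1-\delta$ after relabeling constants) we have $\|\widehat\Sigma-\Sigma\|_{\mathrm{op}}\le c\,K^2\bigl(\sqrt{(p+1+\log(1/\delta))/n}+(p+1+\log(1/\delta))/n\bigr)$, and for $n\ge p+1+\log(1/\delta)$ the quadratic term is dominated by the square-root term, so $\|\widehat\Sigma-\Sigma\|_{\mathrm{op}}\lesssim K^2\sqrt{(p+\log(1/\delta))/n}$. Chaining the three links produces the full displayed inequality, with the trailing $\lesssim$ hiding a constant that depends only on $\lambda_0,\Lambda_0$, and the projector bound follows from the first link.

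\emph{Main obstacle.} The only step beyond routine bookkeeping is the second link, and within it the perturbation of the inverse square root $\Sigma_{X_M X_M}^{-1/2}$: one cannot bound $\|\widehat A^{-1/2}-A^{-1/2}\|_{\mathrm{op}}$ by something like $\|\widehat A^{-1}-A^{-1}\|_{\mathrm{op}}^{1/2}$ or by a naive triangle-inequality manipulation, so the operator-Lipschitz estimate for $t\mapsto t^{-1/2}$ is genuinely needed, and that estimate only applies after one has certified, on a high-probability event, that $\widehat A$ stays uniformly bounded away from $0$. Handling this ``good event'' cleanly — so that the final failure probability is still $\delta$ and the constants remain functions of $\lambda_0,\Lambda_0$ only — is the delicate part; everything else (Weyl's inequality, submultiplicativity, resolvent identities, constant tracking) is standard.
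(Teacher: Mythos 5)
Your proposal is correct and follows essentially the same route as the paper's proof: Davis--Kahan with the eigengap $\gamma$, a telescoping perturbation of $\widehat\Omega_M-\Omega_M$ over its factors on the good event $\{\|\widehat\Sigma-\Sigma\|_{\mathrm{op}}\le\lambda_0/2\}$, the operator-Lipschitz bound for $t\mapsto t^{-1/2}$ on $[\lambda_0/2,\infty)$, Schur-complement control of the middle factor, and finally the covariance concentration of Theorem~\ref{thm:cov_conc}. The only cosmetic difference is that the paper normalizes $\Sigma_{TT}=1$ and bounds $\|\Sigma_{XT}\|_{\mathrm{op}}\le\sqrt{\Lambda_0}$ via the Schur complement so that $C'$ depends only on $\lambda_0,\Lambda_0$, whereas you invoke $\|\Sigma\|_{\mathrm{op}}\lesssim K^2$ for those blocks; either bookkeeping works.
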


\medskip
\noindent\textbf{(C) Finite-sample excess risk (source) and transfer (target).} Theorem~\ref{thm:pred_rate} yields an explicit \(O\!\big((p+\log(1/\delta))/n\big)\) excess-risk rate that depends only on the blanket dimension,
formalizing the sample-efficiency benefit of restricting inputs to \(\MB(T)\).
\begin{theorem}[MB--GIB excess risk rate]\label{thm:pred_rate}
Assume Assumptions~\ref{ass:subg}--\ref{ass:cond}.
Let \(g^\star(x_M)=\mathbb{E}_{p_{\mathrm{s}}}[T\mid X_M=x_M]\) denote the population conditional mean.
Let \(\widehat g\) be the MB--GIB predictor obtained by (i) computing \(\widehat\Omega_M\), (ii) taking \(\widehat U\) (or equivalently the corresponding unwhitened \(W\)),
and (iii) fitting the optimal linear decoder from \(Z=\widehat W^\top X_M\) to \(T\) (Sec.~\ref{subsec:mbgib}).
Then, for squared loss \(\mathcal R_{\mathrm{s}}(g):=\mathbb{E}_{p_{\mathrm{s}}}\big[(T-g(X_M))^2\big]\), with probability at least \(1-\delta\),
\[
\mathcal R_{\mathrm{s}}(\widehat g)-\mathcal R_{\mathrm{s}}(g^\star)
\;\le\;
C''\,\|\widehat\Sigma-\Sigma\|_{\mathrm{op}}^2
\;\lesssim\;
K^4\,\frac{p+\log(1/\delta)}{n},
\]
where \(C''>0\) depends only on \(\lambda_0,\Lambda_0\).
\end{theorem}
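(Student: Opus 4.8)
The plan is to show that, for a scalar target $T$, the MB--GIB construction collapses to ordinary least squares on $X_M$, and then to control that estimator by a plug-in perturbation argument driven by the covariance concentration of Theorem~\ref{thm:cov_conc}. \textbf{Step 1 (reduction to OLS).} Because $T$ is one-dimensional, $\Sigma_{XT}$ and $\widehat\Sigma_{XT}$ are single columns, so $\Omega_M$ and $\widehat\Omega_M$ are rank-one positive semidefinite matrices, and the unwhitened version of their unique informative eigenvector is proportional to $\Sigma_{XX}^{-1}\Sigma_{XT}$ (resp.\ $\widehat\Sigma_{XX}^{-1}\widehat\Sigma_{XT}$). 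For any chosen bottleneck dimension $d\ge 1$ the top-$d$ eigenspace contains this direction, while the remaining $d-1$ directions lie in the null space of $\widehat\Omega_M$, i.e.\ their features are empirically uncorrelated with $T$; since the GIB features $Z=\widehat W^\top X_M$ are empirically white by construction, the least-squares decoder assigns each feature the coefficient equal to its empirical covariance with $T$, so those $d-1$ coefficients vanish. Hence $\widehat g(x)=\widehat\beta^\top x$ with $\widehat\beta=\widehat\Sigma_{XX}^{-1}\widehat\Sigma_{XT}$, while in the population $g^\star(x_M)=\beta_\star^\top x_M$ with $\beta_\star=\Sigma_{XX}^{-1}\Sigma_{XT}$, using linearity of $\mathbb{E}_{p_{\mathrm s}}[T\mid X_M]$ in the linear--Gaussian regime of Sec.~\ref{subsec:mbgib}. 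This is exactly why no spectral-gap hypothesis (Assumption~\ref{ass:gap}) is invoked here.

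\textbf{Step 2 (excess risk as a quadratic form, then perturbation).} For squared loss, writing $T=g^\star(X_M)+\xi$ with $\mathbb{E}[\xi\mid X_M]=0$ and expanding the square, the cross term vanishes, so $\mathcal R_{\mathrm s}(g)-\mathcal R_{\mathrm s}(g^\star)=\mathbb{E}_{p_{\mathrm s}}\big[(g(X_M)-g^\star(X_M))^2\big]$ for every $g$; for the two linear forms this equals $(\widehat\beta-\beta_\star)^\top\Sigma_{XX}(\widehat\beta-\beta_\star)\le\Lambda_0\|\widehat\beta-\beta_\star\|_2^2$ by Assumption~\ref{ass:cond}. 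To bound $\|\widehat\beta-\beta_\star\|_2$ I would work on the event of Theorem~\ref{thm:cov_conc}, with the implicit sample-size condition $n\gtrsim K^4(p+\log(1/\delta))/\lambda_0^2$ ensuring $\|\widehat\Sigma-\Sigma\|_{\mathrm{op}}\le\lambda_0/2$; Weyl's inequality then gives $\lambda_{\min}(\widehat\Sigma_{XX})\ge\lambda_0/2$, hence $\|\widehat\Sigma_{XX}^{-1}\|_{\mathrm{op}}\le 2/\lambda_0$. The resolvent identity yields
\[
\widehat\beta-\beta_\star=\widehat\Sigma_{XX}^{-1}(\widehat\Sigma_{XT}-\Sigma_{XT})-\widehat\Sigma_{XX}^{-1}(\widehat\Sigma_{XX}-\Sigma_{XX})\,\beta_\star,
\]
so bounding each block error by $\|\widehat\Sigma-\Sigma\|_{\mathrm{op}}$ (Theorem~\ref{thm:cov_conc}) and $\|\beta_\star\|_2\le\lambda_0^{-1/2}\Lambda_0^{1/2}\Sigma_{TT}^{1/2}$ gives $\|\widehat\beta-\beta_\star\|_2\le c(\lambda_0,\Lambda_0,\Sigma_{TT})\,\|\widehat\Sigma-\Sigma\|_{\mathrm{op}}$.

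\textbf{Step 3 (conclude) and the main obstacle.} Combining the two displays gives the first inequality $\mathcal R_{\mathrm s}(\widehat g)-\mathcal R_{\mathrm s}(g^\star)\le C''\|\widehat\Sigma-\Sigma\|_{\mathrm{op}}^2$ with $C''$ depending only on $\lambda_0,\Lambda_0$ (and the scale of $T$); substituting Theorem~\ref{thm:cov_conc} and using $n\gtrsim p+\log(1/\delta)$ to absorb the $(p+1+\log(1/\delta))/n$ term into the $\sqrt{\cdot}$ term yields $\mathcal R_{\mathrm s}(\widehat g)-\mathcal R_{\mathrm s}(g^\star)\lesssim K^4(p+\log(1/\delta))/n$ with probability at least $1-\delta$, where $p=|M|$ is the advertised blanket-only dependence. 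I expect the only delicate point to be Step~1: one must verify that enlarging the bottleneck past the single informative CCA direction injects no variance, precisely because the extra directions are those empirically uncorrelated with the scalar $T$ and are therefore killed by the linear decoder, so the estimator reduces to plain OLS and no eigengap is needed. For a vector-valued target this collapse fails and one would instead route through the Davis--Kahan/subspace bound of Theorem~\ref{thm:gib_subspace}, which is why Assumption~\ref{ass:gap} is listed there but not here; the remaining work---the resolvent expansion and the good-event bookkeeping handling a possibly ill-conditioned $\widehat\Sigma_{XX}$---is routine.
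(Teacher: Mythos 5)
Your proposal is correct and follows essentially the same route as the paper's proof: reduce to OLS via the rank-one structure of $\Omega_M$ for scalar $T$, express the excess risk as the quadratic form $(\widehat\beta-\beta_\star)^\top\Sigma_{XX}(\widehat\beta-\beta_\star)$, bound $\|\widehat\beta-\beta_\star\|_2$ by a resolvent-style perturbation argument on the good event $\{\|\widehat\Sigma-\Sigma\|_{\mathrm{op}}\le\lambda_0/2\}$, and plug in Theorem~\ref{thm:cov_conc}. Your resolvent decomposition $\widehat\beta-\beta_\star=\widehat\Sigma_{XX}^{-1}\Delta_{XT}-\widehat\Sigma_{XX}^{-1}\Delta_{XX}\beta_\star$ is algebraically identical to the paper's split through $\widehat\Sigma_{XX}^{-1}-\Sigma_{XX}^{-1}$, and your added explanation of why bottleneck directions beyond the single informative one are annihilated by the linear decoder (empirically white $Z$, zero empirical covariance with $T$) is a useful clarification of why Assumption~\ref{ass:gap} is not needed here.
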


Corollaries~\ref{cor:target_rate}--\ref{cor:approx_invar} formalize robustness under blanket-based transfer.
Under exact MB invariance, the Bayes rule \(g^\star(x_M)=\mathbb{E}[T\mid X_M=x_M]\) is shared across domains,
and the target excess risk is controlled by the source excess risk up to a blanket density-ratio factor \(\rho_M\);
in the special case \(p_{\mathrm{s}}(X_M)=p_{\mathrm{t}}(X_M)\) this reduces to an exact equality.
Under approximate invariance, departures from the ideal transfer regime are quantified by the conditional-mean mismatch
\(\Delta(x_M)\) within the blanket, yielding an explicit degradation bound in terms of \(\mathbb{E}_{p_t}|\Delta|\) and \(\mathbb{E}_{p_t}\Delta^2\).
Finally, for the nonlinear MB--VIB variant, analogous non-asymptotic guarantees require explicit capacity control for the encoder/decoder
(e.g., Lipschitz/covering-number conditions or PAC-Bayes complexity). Our main theoretical guarantees therefore target MB--GIB (Gaussian/linear),
while MB--VIB is a practical extension motivated by the same structural restriction and validated empirically.

\begin{corollary}[Zero-shot target bound under MB invariance]\label{cor:target_rate}
Assume MB invariance \(p_{\mathrm{s}}(T\mid X_M)=p_{\mathrm{t}}(T\mid X_M)\), squared loss, and that
\(p_{\mathrm{t}}(X_M)\ll p_{\mathrm{s}}(X_M)\) with
\[
\rho_M \;:=\; \operatorname*{ess\,sup}_{x_M}\frac{p_{\mathrm{t}}(x_M)}{p_{\mathrm{s}}(x_M)} \;<\;\infty.
\]
Then the Bayes predictor
\[
g^\star(x_M)=\mathbb{E}[T\mid X_M=x_M]
\]
is shared across domains and is target-optimal among all predictors measurable w.r.t.\ \(X_M\).
Moreover, for any estimator \(\widehat g\) (in particular, MB--GIB) we have the conditional excess-risk identity
\[
\mathcal R_{\nu}(\widehat g)-\mathcal R_{\nu}(g^\star)
\;=\;
\mathbb{E}_{p_{\nu}(X_M)}\!\big[(\widehat g(X_M)-g^\star(X_M))^2\big],
\qquad \nu\in\{\mathrm{s},\mathrm{t}\}.
\]
Consequently,
\[
\mathcal R_{\mathrm{t}}(\widehat g)-\mathcal R_{\mathrm{t}}(g^\star)
\;\le\;
\rho_M\,
\big(\mathcal R_{\mathrm{s}}(\widehat g)-\mathcal R_{\mathrm{s}}(g^\star)\big).
\]
In particular, under the assumptions of Theorem~\ref{thm:pred_rate}, with probability at least \(1-\delta\),
\[
\mathcal R_{\mathrm{t}}(\widehat g)-\mathcal R_{\mathrm{t}}(g^\star)
\;\lesssim\;
\rho_M\,K^4\,\frac{p+\log(1/\delta)}{n}.
\]
In the special case \(p_{\mathrm{s}}(X_M)=p_{\mathrm{t}}(X_M)\), we have \(\rho_M=1\) and the target and source excess risks coincide.
\end{corollary}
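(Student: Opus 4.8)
The plan is to assemble the corollary from three elementary facts: invariance of the conditional mean, the orthogonal (bias--variance) decomposition of squared risk, and a pointwise change-of-measure bound on the blanket marginal. Nothing here is deep; the care is in tracking the randomness of $\widehat g$ and in locating exactly where MB invariance is used.

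First I would settle the claims about $g^\star$. Since MB invariance asserts $p_{\mathrm{s}}(T\mid X_M)=p_{\mathrm{t}}(T\mid X_M)$ as conditional laws, taking conditional expectations on both sides gives $\mathbb{E}_{p_{\mathrm{s}}}[T\mid X_M=x_M]=\mathbb{E}_{p_{\mathrm{t}}}[T\mid X_M=x_M]$, so $g^\star$ is well defined independently of the domain. Target-optimality among $X_M$-measurable predictors is then the standard fact that the conditional mean minimizes $\mathbb{E}_{p_{\mathrm{t}}}[(T-g(X_M))^2]$ over all measurable $g$, which uses only that $g^\star$ is the $p_{\mathrm{t}}$-conditional mean.

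Next, for the conditional excess-risk identity I would fix any $X_M$-measurable $g$ --- in particular $g=\widehat g$, regarded as deterministic by conditioning on the source sample --- and expand the squared loss around $g^\star$:
\begin{align*}
\mathcal{R}_\nu(g)-\mathcal{R}_\nu(g^\star)
&=\mathbb{E}_{p_\nu}\!\big[(g(X_M)-g^\star(X_M))^2\big] \\
&\quad+2\,\mathbb{E}_{p_\nu}\!\big[(T-g^\star(X_M))\,(g^\star(X_M)-g(X_M))\big].
\end{align*}
Conditioning the cross term on $X_M$ and invoking $\mathbb{E}_{p_\nu}[T-g^\star(X_M)\mid X_M]=0$ --- which holds for both $\nu=\mathrm{s}$ and $\nu=\mathrm{t}$ precisely because $g^\star$ is the common conditional mean --- annihilates it, leaving $\mathcal{R}_\nu(g)-\mathcal{R}_\nu(g^\star)=\mathbb{E}_{p_\nu(X_M)}[(g(X_M)-g^\star(X_M))^2]$. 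Writing this identity for $\nu=\mathrm{t}$ and for $\nu=\mathrm{s}$, and dominating the target integrand pointwise via $p_{\mathrm{t}}(x_M)=\tfrac{p_{\mathrm{t}}(x_M)}{p_{\mathrm{s}}(x_M)}\,p_{\mathrm{s}}(x_M)\le\rho_M\,p_{\mathrm{s}}(x_M)$ (legitimate because $p_{\mathrm{t}}(X_M)\ll p_{\mathrm{s}}(X_M)$ with essential supremum $\rho_M<\infty$), yields $\mathcal{R}_{\mathrm{t}}(\widehat g)-\mathcal{R}_{\mathrm{t}}(g^\star)\le\rho_M\big(\mathcal{R}_{\mathrm{s}}(\widehat g)-\mathcal{R}_{\mathrm{s}}(g^\star)\big)$. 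Substituting the high-probability source bound from Theorem~\ref{thm:pred_rate} --- valid on the same event, since we conditioned on the source sample --- gives the stated $\rho_M K^4(p+\log(1/\delta))/n$ rate, and setting $p_{\mathrm{s}}(X_M)=p_{\mathrm{t}}(X_M)$ forces $\rho_M=1$ and upgrades the inequality to the equality $\mathcal{R}_{\mathrm{t}}(\widehat g)-\mathcal{R}_{\mathrm{t}}(g^\star)=\mathcal{R}_{\mathrm{s}}(\widehat g)-\mathcal{R}_{\mathrm{s}}(g^\star)$.

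There is no real obstacle, only one point to state carefully: the excess-risk identity and the $\rho_M$ bound are deterministic, holding for every realization of $\widehat g$, so they compose with the probabilistic guarantee of Theorem~\ref{thm:pred_rate} without any union bound or extra randomness accounting. It is worth flagging that the cross-term cancellation is exactly the step where MB invariance is load-bearing: without it, $g^\star$ would not equal the $p_{\mathrm{t}}$-conditional mean, the cross term under $\nu=\mathrm{t}$ would survive, and an irreducible bias term $\mathbb{E}_{p_{\mathrm{t}}}\big[(\mathbb{E}_{p_{\mathrm{t}}}[T\mid X_M]-g^\star(X_M))^2\big]$ would enter --- this is the $\Delta(x_M)$ correction treated in the approximate-invariance corollary.
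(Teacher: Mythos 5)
Your argument matches the paper's proof step for step: derive the shared Bayes predictor from MB invariance, expand the squared loss to cancel the cross term and obtain the regression decomposition $\mathcal R_\nu(\widehat g)-\mathcal R_\nu(g^\star)=\mathbb{E}_{p_\nu(X_M)}[(\widehat g-g^\star)^2]$, pass from target to source via the density ratio bounded by $\rho_M$, and invoke Theorem~\ref{thm:pred_rate}. Your explicit cross-term cancellation and the remark on conditioning on the source sample are slightly more verbose than the paper's appeal to "the regression decomposition," but the substance is identical.
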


\begin{corollary}[Approximate MB invariance]\label{cor:approx_invar}
Assume squared loss and define the conditional-mean mismatch within the blanket by
\[
\Delta(x_M)\;:=\;\mathbb{E}_{p_{\mathrm{t}}}[T\mid X_M=x_M]-\mathbb{E}_{p_{\mathrm{s}}}[T\mid X_M=x_M]
\;=\; g^\star_{\mathrm{t}}(x_M)-g^\star_{\mathrm{s}}(x_M).
\]
Then for any predictor \(\widehat g\),
\[
\Big|\big(\mathcal R_{\mathrm{t}}(\widehat g)-\mathcal R_{\mathrm{t}}(g^\star_{\mathrm{t}})\big)
-\mathbb{E}_{p_{\mathrm{t}}(X_M)}\!\big[(\widehat g(X_M)-g^\star_{\mathrm{s}}(X_M))^2\big]\Big|
\;\le\;
2\,\mathbb{E}_{p_{\mathrm{t}}}\!\left[|\widehat g(X_M)-g^\star_{\mathrm{s}}(X_M)|\,|\Delta(X_M)|\right]
+\mathbb{E}_{p_{\mathrm{t}}}\!\left[\Delta(X_M)^2\right].
\]
In particular, if shifts occur only outside \(M\) then \(\Delta\equiv 0\) and
\[
\mathcal R_{\mathrm{t}}(\widehat g)-\mathcal R_{\mathrm{t}}(g^\star_{\mathrm{t}})
\;=\;
\mathbb{E}_{p_{\mathrm{t}}(X_M)}\!\big[(\widehat g(X_M)-g^\star_{\mathrm{s}}(X_M))^2\big],
\]
so the target excess risk is exactly the target \(L_2\) error to the source conditional mean.
More generally, target degradation is controlled by the blanket mismatch \(\Delta\).
\end{corollary}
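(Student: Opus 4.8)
\textbf{Proof proposal for Corollary~\ref{cor:approx_invar}.}
The plan is to reduce the claim to the elementary orthogonal (bias–variance) decomposition of squared risk and then re-expand around $g^\star_{\mathrm{s}}$ instead of $g^\star_{\mathrm{t}}$, collecting the cross terms. Fix any $X_M$-measurable predictor $\widehat g$ with $\widehat g(X_M)\in L^2(p_{\mathrm{t}})$, and assume $T\in L^2(p_{\mathrm{t}})$ so every expectation below is finite. Since $g^\star_{\mathrm{t}}(X_M)=\mathbb{E}_{p_{\mathrm{t}}}[T\mid X_M]$, the residual $T-g^\star_{\mathrm{t}}(X_M)$ is orthogonal in $L^2(p_{\mathrm{t}})$ to every $\sigma(X_M)$-measurable function; conditioning on $X_M$ and invoking the tower property kills the cross term, yielding the exact identity
\[
\mathcal{R}_{\mathrm{t}}(\widehat g)-\mathcal{R}_{\mathrm{t}}(g^\star_{\mathrm{t}})
=\mathbb{E}_{p_{\mathrm{t}}(X_M)}\!\big[(\widehat g(X_M)-g^\star_{\mathrm{t}}(X_M))^2\big].
\]
This is the only place the squared-loss structure and the defining property of $g^\star_{\mathrm{t}}$ enter.

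Next I would substitute $g^\star_{\mathrm{t}}=g^\star_{\mathrm{s}}+\Delta$ (the definition of $\Delta$) inside the square and expand $(\widehat g-g^\star_{\mathrm{t}})^2=(\widehat g-g^\star_{\mathrm{s}})^2-2(\widehat g-g^\star_{\mathrm{s}})\Delta+\Delta^2$. Taking $\mathbb{E}_{p_{\mathrm{t}}(X_M)}$ of both sides and rearranging isolates the discrepancy between the target excess risk and the target $L_2$ error to the source conditional mean:
\[
\big(\mathcal{R}_{\mathrm{t}}(\widehat g)-\mathcal{R}_{\mathrm{t}}(g^\star_{\mathrm{t}})\big)
-\mathbb{E}_{p_{\mathrm{t}}(X_M)}\!\big[(\widehat g(X_M)-g^\star_{\mathrm{s}}(X_M))^2\big]
=-2\,\mathbb{E}_{p_{\mathrm{t}}}\!\big[(\widehat g(X_M)-g^\star_{\mathrm{s}}(X_M))\Delta(X_M)\big]+\mathbb{E}_{p_{\mathrm{t}}}\!\big[\Delta(X_M)^2\big].
\]
Applying the triangle inequality and $|\mathbb{E}[AB]|\le\mathbb{E}[|A|\,|B|]$ to the first right-hand term gives the stated bound. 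The ``in particular'' claim then follows at once: if all shifts lie outside $M$, the shift model together with $T\perp X\setminus(M\cup\{T\})\mid X_M$ forces $p_{\mathrm{s}}(T\mid X_M)=p_{\mathrm{t}}(T\mid X_M)$, hence $\Delta\equiv 0$ and both right-hand terms vanish, leaving the exact equality; the closing ``more generally'' sentence is just the observation that the bound degrades continuously in $\mathbb{E}_{p_{\mathrm{t}}}|\Delta|$ and $\mathbb{E}_{p_{\mathrm{t}}}\Delta^2$.

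I do not expect a genuine obstacle: the result is essentially a bookkeeping identity. The only points meriting care are (i) the measurability/integrability hypotheses that license the $L^2(p_{\mathrm{t}})$ projection step — in particular $\widehat g$ must depend on $X_M$ alone, which is exactly the blanket-restricted setting of Section~\ref{subsec:transfer}; and (ii) being explicit that the cross-term cancellation uses $p_{\mathrm{t}}$'s \emph{own} conditional mean $g^\star_{\mathrm{t}}$, which is precisely why $\Delta$ re-enters when we re-expand around the source predictor $g^\star_{\mathrm{s}}$. If a density-ratio-free, one-sided control is wanted, the same expansion combined with Cauchy–Schwarz, $\mathbb{E}_{p_{\mathrm{t}}}[|\widehat g-g^\star_{\mathrm{s}}|\,|\Delta|]\le \|\widehat g-g^\star_{\mathrm{s}}\|_{L^2(p_{\mathrm{t}})}\|\Delta\|_{L^2(p_{\mathrm{t}})}$, yields a clean quadratic-in-$\|\Delta\|$ degradation term that can be fed directly into the finite-sample rate of Theorem~\ref{thm:pred_rate}.
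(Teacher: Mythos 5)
Your proof is correct and follows essentially the same route as the paper: both expand $(\widehat g-g^\star_{\mathrm{t}})^2$ around $g^\star_{\mathrm{s}}$ via $g^\star_{\mathrm{t}}=g^\star_{\mathrm{s}}+\Delta$, and the only cosmetic difference is that you first state the exact cross-term identity in expectation and then apply the triangle inequality, whereas the paper takes the pointwise absolute-value bound before integrating. Your added remarks on integrability, $X_M$-measurability of $\widehat g$, and the Cauchy–Schwarz refinement are sensible but not needed for the statement as given.
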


\subsection{Algorithms and Complexity}\label{subsec:algorithms}
Here, we summarize training and deployment for MB--GIB and MB--VIB algorithms:
\paragraph{MB--GIB (linear--Gaussian; closed form).}
\textit{Input:} source pairs $(X_S,T)$ with $X_S\in\MB(T)$, bottleneck dim.\ $d$. \;
\textit{Steps:}
\begin{enumerate}[leftmargin=*, itemsep=2pt]
  \item Standardize $X_S,T$ on the source; compute $\widehat\Sigma_{X_SX_S},\widehat\Sigma_{X_ST},\widehat\Sigma_{TT}$.
  \item Form the CCA/IB operator $\widehat\Omega_{X_S}=\widehat\Sigma_{X_SX_S}^{-1/2}\widehat\Sigma_{X_ST}\widehat\Sigma_{TT}^{-1}\widehat\Sigma_{TX_S}\widehat\Sigma_{X_SX_S}^{-1/2}$.
  \item Take top-$d$ eigenvectors $\widehat W$ of $\widehat\Omega_{X_S}$; encode $Z=\widehat W^\top X_S$.
  \item Fit decoder by least squares: $\widehat B=\arg\min_B \|T-Z^\top B\|_2^2$; at test time predict $\hat T=\widehat B^\top \widehat W^\top x_{S,\mathrm{t}}$.
\end{enumerate}
\textit{Cost:} covariance $O(n|X_S|^2)$; eigen-decomp.\ $O(|X_S|^3)$; least squares $O(n d^2)$.

\paragraph{MB--VIB (nonlinear, non-Gaussian; variational).}
\textit{Input:} source pairs $(X_S,T)$, encoder $q_\phi(z\mid x_S)$, decoder $q_\theta(t\mid z)$, prior $r(z)$, trade-off $\beta$, epochs $E$. \;
\textit{Steps:}
\begin{enumerate}[leftmargin=*, itemsep=2pt]
  \item Initialize $\phi,\theta$; standardize $X_S$; choose $q_\theta$ (Gaussian/Laplace/Student-$t$) to match $T$.
  \item For $e=1,\ldots,E$: sample minibatches, draw $z=\mu_\phi(x_S)+\sigma_\phi(x_S)\odot\epsilon$ (reparameterization).
  \item Minimize
  \[
    \mathcal L_{\mathrm{VIB}}
    = \mathbb E[-\log q_\theta(T\mid Z)]
      + \beta\,\mathbb E\big[D_{\mathrm{KL}}(q_\phi(Z\mid X_S)\,\|\,r(Z))\big]
  \]
  by SGD; apply early stopping on source validation loss.
  \item At test time, encode $z=f_\phi(x_{S,\mathrm{t}})$; output $\hat T=\mathbb E_{q_\theta}[T\mid z]$ (regression) or $\arg\max_y q_\theta(y\mid z)$ (classification).
\end{enumerate}
\textit{Cost:} per-epoch $O(n\cdot d_z\cdot H)$ where $d_z$ is latent dimension and $H$ is encoder/decoder width; memory linear in $n$ and model size.

\paragraph{Notes.}
(1) The \emph{only} inputs to the encoder is $X_S\in\MB(T)$, enforcing the graphical constraint. (2) Choose $d$ or $d_z$ small and increase until validation loss saturates; increase $\beta$ for more robustness. (3) For heavy-tailed targets, prefer Student-$t$ decoders.

\section{Experimental Results}\label{sec:results}

In this section we evaluate DAG-aware Information Bottlenecks under domain shift. We report headline results on three settings: a controlled \emph{7-node} SEM with known Markov blanket (for transparent diagnostics), the \emph{64-node MAGIC--IRRI} Gaussian Bayesian network (benchmark scale; (Scutari, 2016)\footnote{The network structure and data are available at \url{https://www.bnlearn.com/bnrepository/}.}), and a real single-cell signaling dataset from \emph{Sachs et al.}~\cite{sachs2005causal}, which provides a stringent biological transfer test where the target (T) is unobserved at deployment. Across all datasets we compare \textbf{MB--GIB} and \textbf{MB--VIB} against baselines---Bayesian network (BN), pure deep neural network (DNN), and an IIB-style variant \cite{li2022invariant}---under covariate and generalized target shifts, using MAE/RMSE/($R^2$) (mean($\pm$)SE over seeds) and runtime. To conserve space, comprehensive \emph{ablations} (scope: Parents/MB/Global, capacity and ($\beta$), likelihood, MB misspecification) and \emph{sensitivity} analyses (shift magnitude/type, support mismatch, missingness, ($N_s$) curves) are conducted on the 7-node SEM and deferred to the supplement; the main text fixes hyperparameters from those sweeps and reports cross-dataset, zero-shot transfer performance. All experiments were executed on a Windows workstation equipped with a 12th Gen Intel(R) Core(TM) i9-12900H 2.50 GHz processor. The source code and experimental scripts for this work are available at the supplementary materials. Code to reproduce the experiments is available at \url{https://github.com/majavid/CDA_IB}.

\subsection{Simulated Experiments: From Motivationg Example}

First, we study a seven–node SEM where contexts \(C_1,C_2\) generate intermediates \(Z,X\), the treatment/latent target \(T\) depends on \((C_1,X,Z)\), and \(T\) drives downstreams \(P,Y\). We adopt the \emph{MB–invariance} setting: the Markov blanket \(\MB(T)=\{C_1,X,Z,P,Y\}\) is shared across domains, while marginal distributions may shift. We consider two large–shift targets: (i) \textbf{covariate shift}, changing only \(C_2\) (e.g., \(C_2\!\sim\!\mathcal{N}(5,2^2)\)) with all mechanisms and noise laws fixed; and (ii) \textbf{generalized target shift}, changing \(C_1\) (e.g., \(C_1\!\sim\!\mathcal{N}(5,2^2)\)) \emph{and} the disturbance/prior of \(T\) via an additive offset and/or inflated \(\varepsilon_T\) variance. In the target domain, \(T\) is unobserved; we impute it zero–shot from observed variables using our DAG–aware bottlenecks, training the encoder/decoder on source pairs \((X_S,T)\) with \(X_S\in\MB(T)\), and comparing MB–GIB/MB–VIB to BN, pure DNN, and IIB–style baselines. For the main results, we repeat each case 5 times.

Based on the results, depicted in Fig. \ref{fig:main_results}, the \textbf{MB-GIB} model demonstrates consistently superior performance across all evaluation metrics and scenarios. In both the covariate shift and target generation settings, \textbf{MB-GIB} achieves the lowest Root Mean Squared Error (RMSE) and Mean Absolute Error (MAE), alongside the highest $R^2$ score. This success is expected, as the MB-GIB's Gaussian-to-Gaussian architecture is perfectly matched to the underlying linear-Gaussian causal model, allowing it to analytically identify the optimal predictive information without approximation error.
The other models show more variable performance. The \textbf{BN} model performs well under the covariate shift but degrades dramatically in the target generation task, where it exhibits the highest error and lowest $R^2$ score by a significant margin. This failure likely occurs because the model's flexible encoder overfits to correlational patterns specific to the source data. When the target's causal mechanism is altered, this non-robust mapping breaks down completely. Similarly, the \textbf{MB-VIB}, \textbf{IIB-style}, and \textbf{PureDNN} models show intermediate performance because their general-purpose neural network components introduce approximation errors and are less effective at isolating the core invariant structure compared to the specialized GIB framework. This highlights that while several models can handle simple covariate shifts, the MB-GIB architecture is uniquely effective for the more challenging generative setting due to its precise alignment with the problem's structure.

\begin{figure}[!ht]
  \centering
  \subfloat[MAE (lower is better).\label{fig:main_mae}]{
    \includegraphics[width=.75\linewidth]{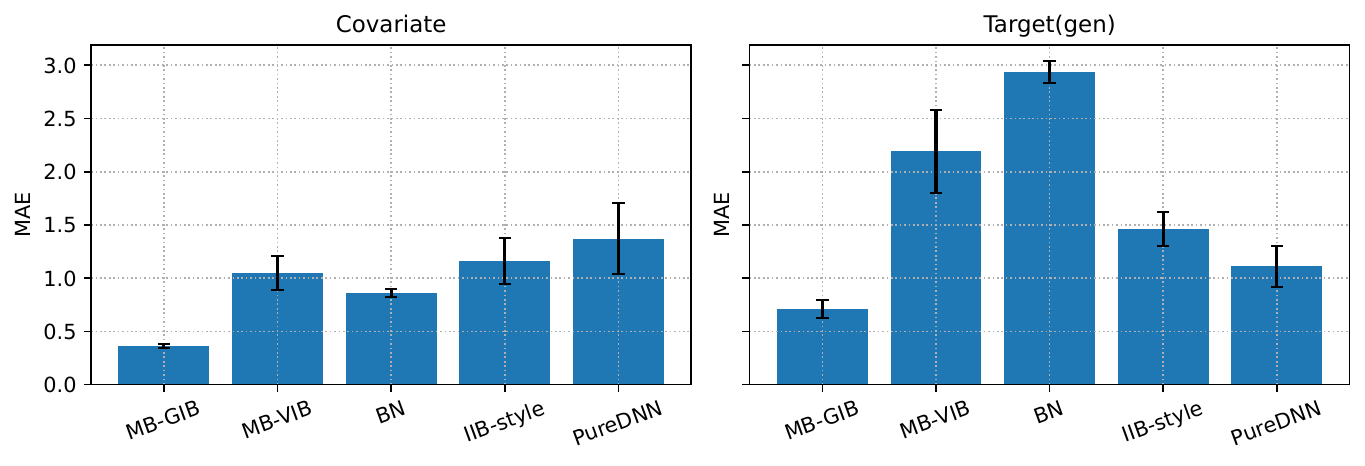}
  }\hfill
  \subfloat[RMSE (lower is better).\label{fig:main_rmse}]{
    \includegraphics[width=.75\linewidth]{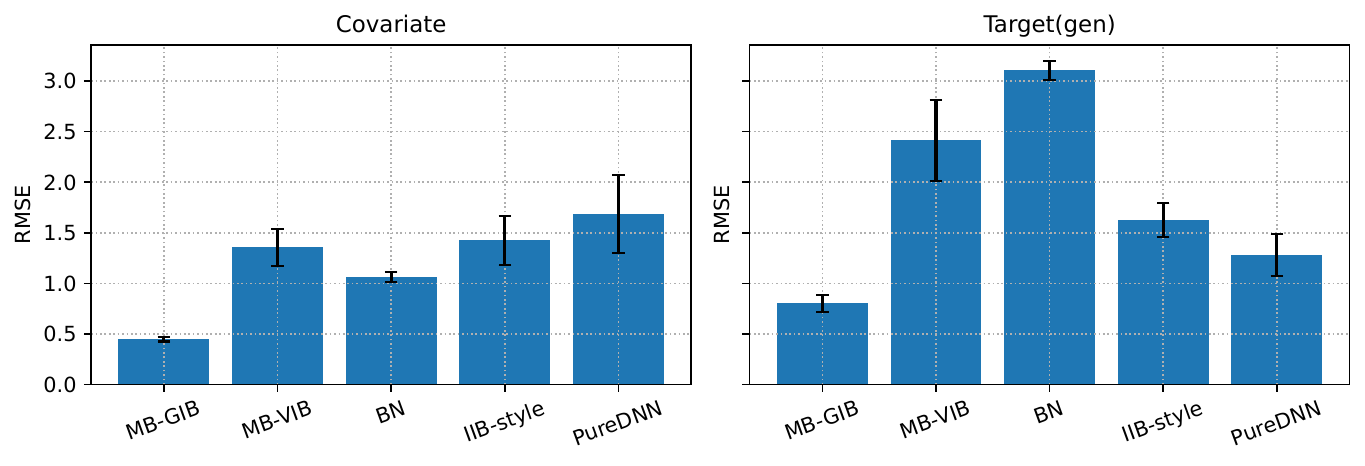}
  }\hfill
  \subfloat[\(R^{2}\) (higher is better).\label{fig:main_r2}]{
    \includegraphics[width=.75\linewidth]{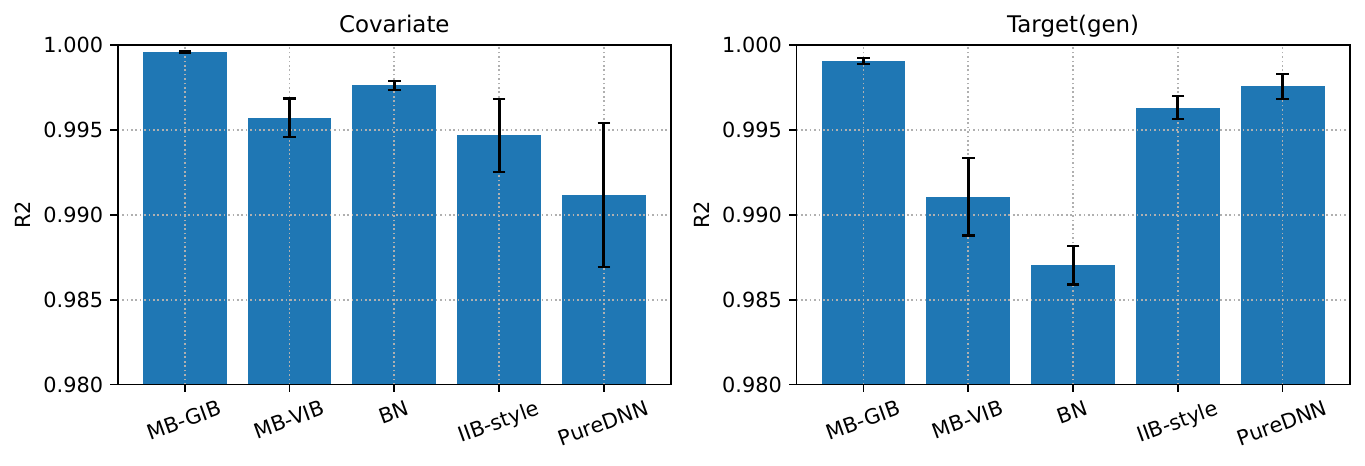}
  }
  \caption{\textbf{Main comparison under covariate and generalized target shift.}
  Bars show mean \(\pm\) s.e. over seeds for MB–GIB, MB–VIB, BN, IIB-style, and pure DNN.
  MB–GIB dominates on all metrics; the gap is largest under target shift.}
  \label{fig:main_results}
\end{figure}

\paragraph{Capacity–compression tradeoffs in MB–VIB.}
From the heatmaps (lower RMSE is better), the Student-\(t\) setting favors a small latent capacity with stronger compression: the best performance occurs at \(z_{\mathrm{dim}}=4\) and \(\beta=0.01\). As capacity grows, \(\beta\) must be tuned carefully—\(z=8,\beta=0.003\) is competitive, but \(z=8,\beta=0.01\) and \(z=16,\beta\in\{0.001,0.01\}\) deteriorate markedly—consistent with heavy tails benefiting from aggressive compression to suppress outliers, while either too much compression (high \(\beta\)) or too little (low \(\beta\)) at large \(z\) harms signal retention. Under Laplace noise, a broad low-RMSE band appears for moderate/large capacities \(z\in[8,16]\) with \(\beta\in[0.003,0.01]\); in contrast, \(z=4,\beta=0.001\) is a clear failure mode. Overall, the results indicate a sweet-spot \emph{coupling} between capacity and compression rather than a single universally optimal choice: heavier tails call for stronger compression and/or smaller \(z\), whereas milder tails work best with moderate compression and larger \(z\). Practically, one should start with a grid such as \(z\in\{4,8,16\}\), \(\beta\in\{0.003,0.01\}\); if validation error rises when increasing \(z\), raise \(\beta\) slightly, and if error rises when increasing \(\beta\), reduce \(\beta\) or \(z\). In short, MB–VIB performs best when capacity and compression are balanced to the noise regime—compress harder with heavy tails or small \(z\), and use moderate \(\beta\) with larger \(z\) under milder tails.

\begin{figure}[!t]
  \centering
  \captionsetup[subfloat]{font=scriptsize, justification=centering, labelfont=md}
  \subfloat[VIB (Student-$t$): RMSE over $z$ and $\beta$. \label{fig:abl_vib_heatmap_student_t}]{
    \includegraphics[width=0.45\linewidth]{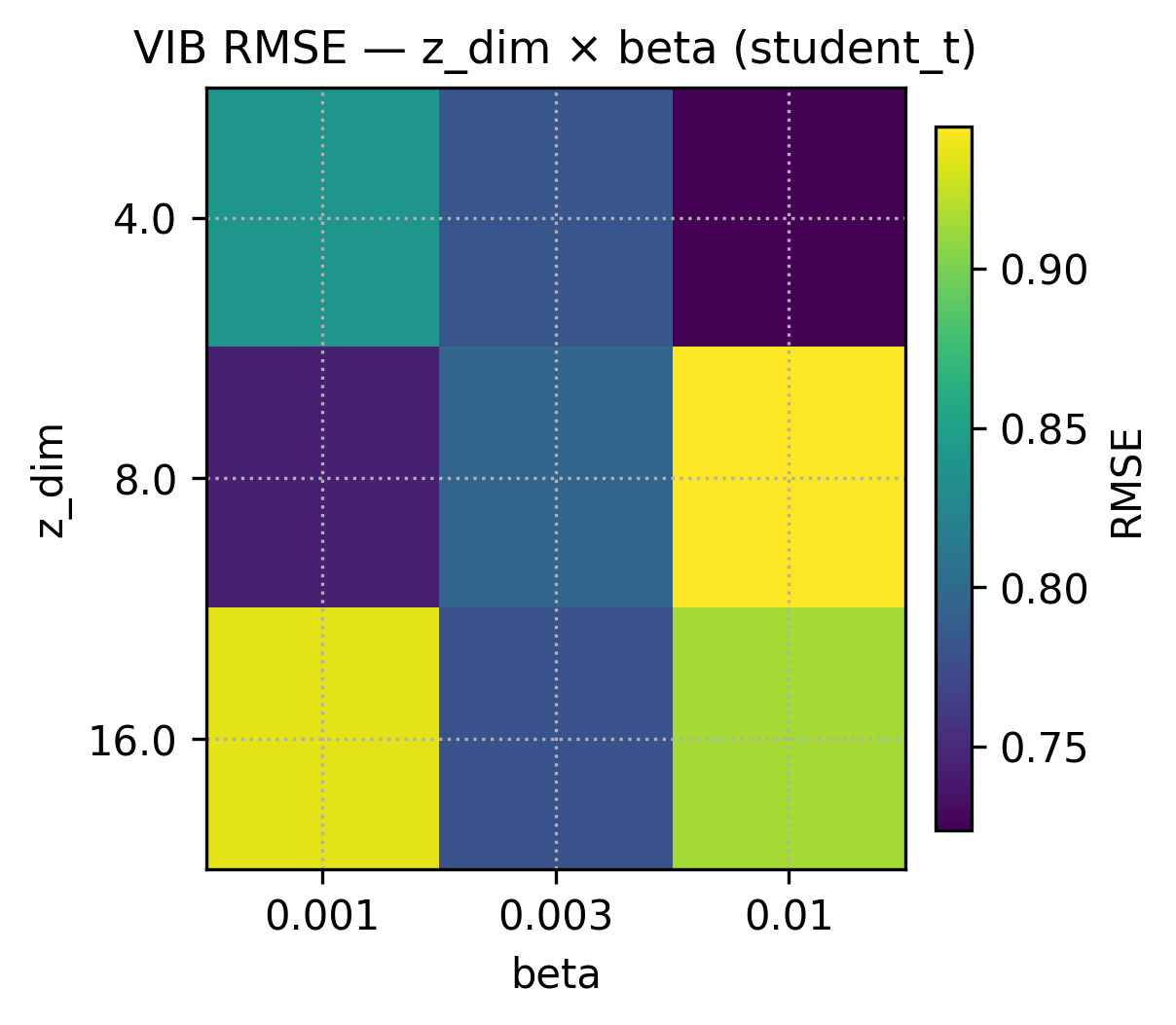}
  }\hfill
  \subfloat[VIB (Laplace): RMSE over $z$ and $\beta$. \label{fig:abl_vib_heatmap_laplace}]{
    \includegraphics[width=0.45\linewidth]{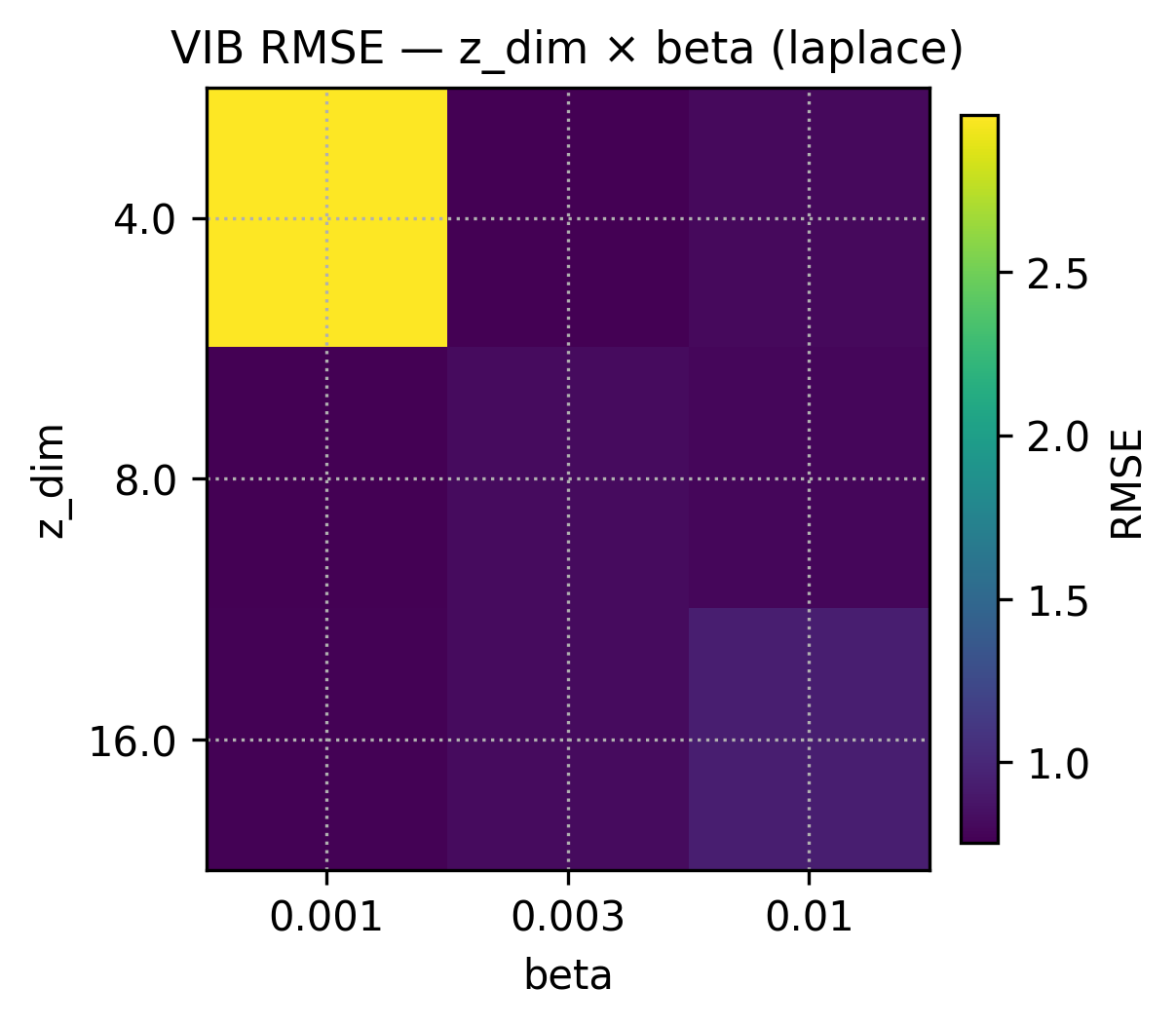}
  }

  \par\smallskip 

  \subfloat[MB–VIB: RMSE under stretch $\times$ missingness. \label{fig:sens_support_missing_vib}]{
    \includegraphics[width=0.45\linewidth]{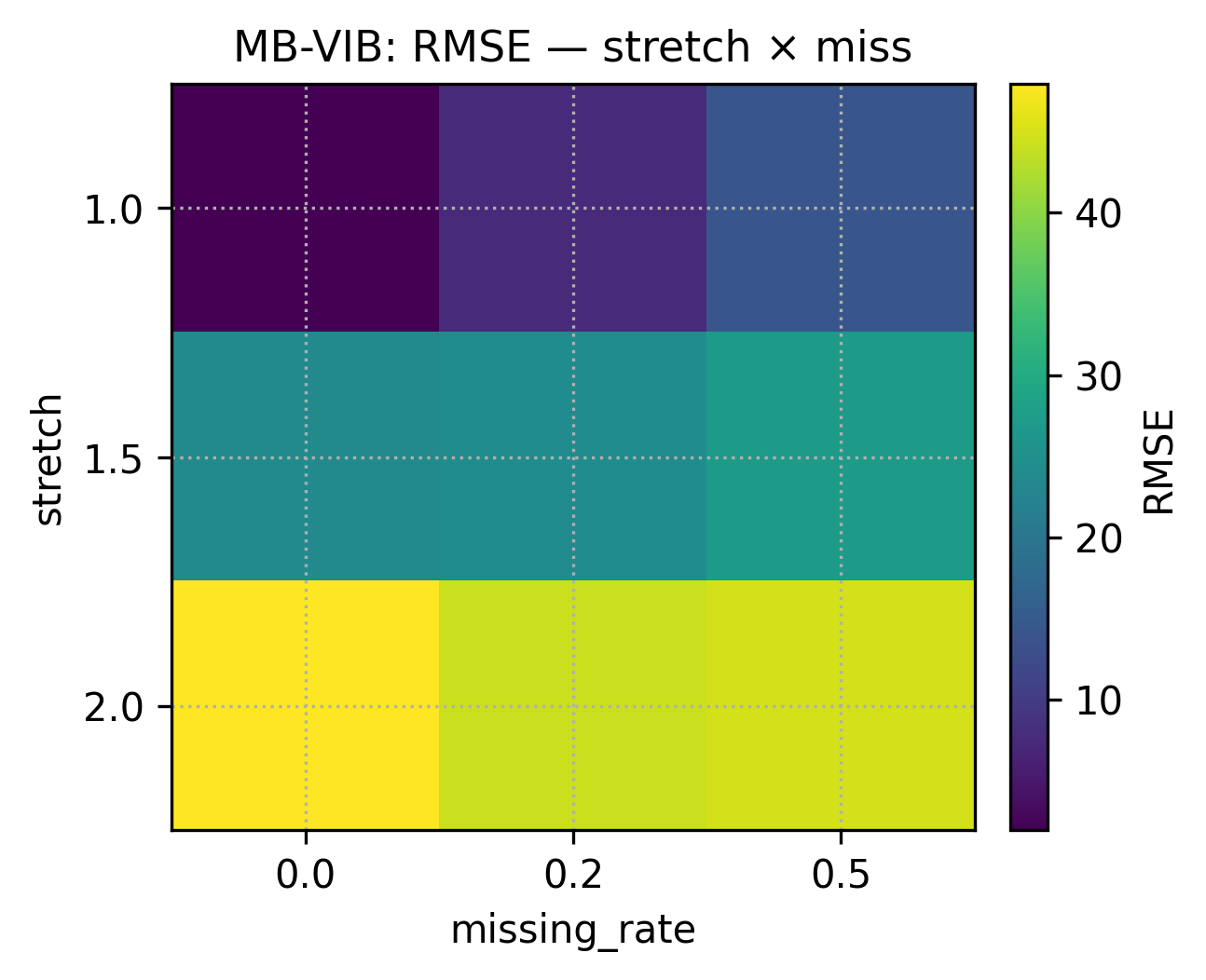}
  }\hfill
  \subfloat[MB–GIB: RMSE under stretch $\times$ missingness. \label{fig:sens_support_missing_gib}]{
    \includegraphics[width=0.45\linewidth]{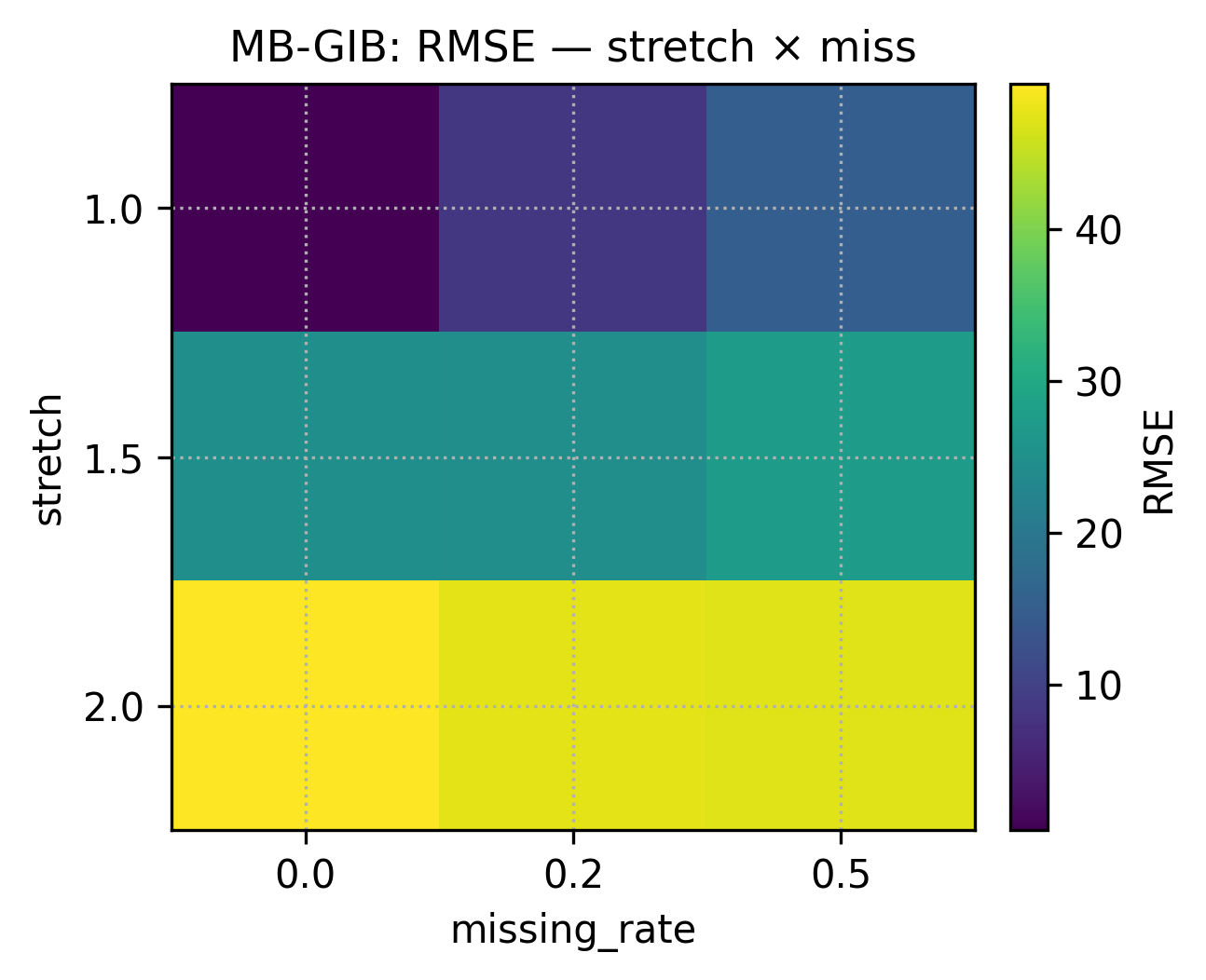}
  }

  \caption{Heatmap summaries. \textbf{Top:} Ablation of VIB capacity ($z$) and compression ($\beta$) for two likelihoods.
  \textbf{Bottom:} Sensitivity to support mismatch (stretch) and missing rate for MB–VIB and MB–GIB. Lower is better (RMSE).}
  \label{fig:heatmaps_ablation_sensitivity}
\end{figure}

\paragraph{Sensitivity to support mismatch and missingness.}
Figure~\ref{fig:heatmaps_ablation_sensitivity} (c), (d) shows that prediction error is dominated by support mismatch rather than missingness. When the stretch factor \(s\) (measuring covariate support shift) is near unity, both MB--VIB and MB--GIB achieve low RMSE, with a mild, roughly monotone degradation as the missingness rate \(m\) rises from \(0\) to \(0.5\). Increasing the mismatch to \(s=1.5\) produces a uniform upward shift in RMSE across all \(m\), indicating that imputation/regularization helps but cannot fully compensate for distributional shift. At severe mismatch \(s=2.0\), error becomes large and essentially insensitive to \(m\), consistent with an extrapolation regime where the predictor leaves the training support. The two variants exhibit nearly identical profiles, suggesting that, for these settings, robustness is limited more by covariate support alignment than by the choice of bottleneck (MB--VIB vs.\ MB--GIB). Practically, this argues for prioritizing support-alignment techniques (e.g., reweighting, representation alignment, or targeted data collection) before tackling high missingness rates; once support is aligned (\(s\approx 1\)), the methods remain usable even with substantial missingness (\(m\le 0.5\)).

\subsection{Simulated Experiments: MAGIC–IRRI Gene Network}
We next consider a large‐scale stress test on the 64‐node \emph{MAGIC–IRRI} Gaussian Bayesian network. To emulate strong experimental perturbations in a multi‐trait genetic model, we apply three marginal‐shift interventions on continuous covariates: \textbf{G4156} is shifted from \( \mathcal N(0.7636,\,0.9721^2) \) to \( \mathcal N(1.5,\,2.0^2) \), \textbf{G4573} from \( \mathcal N(0.1196,\,0.4744^2) \) to \( \mathcal N(1.0,\,1.0^2) \), and \textbf{G1533} from \( \mathcal N(0.8004,\,0.9803^2) \) to \( \mathcal N(0,\,3.0^2) \). After inducing these large shifts in the target domain, we hide the trait of interest \texttt{HT} and evaluate imputation from the remaining variables using several approaches: a source‐trained Bayesian network (BN), mean/variance bottleneck variants (MB–GIB and MB–VIB), an IIB‐style objective, and a pure feedforward DNN. All models are trained on the unshifted source and deployed zero‐shot to the shifted domain.

\paragraph{Results and analysis.}
Table~\ref{tab:magic_irri_results} shows that \textbf{MB–GIB} is the most accurate and robust method under simultaneous large marginal shifts, achieving the best MAE (5.57), RMSE (7.01), and \(R^2=0.567\). \textbf{MB–VIB} remains competitive but lags behind MB–GIB (MAE 7.08, RMSE 10.02, \(R^2=0.121\)), and the \textbf{IIB‐style} variant performs similarly to MB–VIB. The source BN baseline degrades noticeably in this out-of-distribution regime (negative \(R^2=-0.096\)), indicating that purely generative propagation without an information bottleneck is vulnerable to large marginal shifts. The \textbf{pure DNN} fails dramatically (MAE 14.45, RMSE 17.79, \(R^2=-1.77\)), consistent with overfitting to source support and poor extrapolation. Overall, the results support the view that \emph{compression with moment control} (as in MB–GIB) provides a better bias–variance tradeoff for zero‐shot deployment under strong distribution shift in high-dimensional causal networks.

\begin{table}[!ht]
  \centering
  \caption{Imputation performance on the MAGIC-IRRI DAG under multiple large‐shift interventions.}
  \label{tab:magic_irri_results}
  \resizebox{.5\columnwidth}{!}{%
  \begin{tabular}{lccc}
    \toprule
    \textbf{Method} & \textbf{MAE} & \textbf{RMSE} & \(\mathbf{R}^2\) \\
    \midrule
    Bayesian Network 
      & 9.3827  & 11.1872  & –0.0957 \\
    MB-GIB              
      & \textbf{5.5706}  & \textbf{7.0083}  & \textbf{0.5670} \\
    MB-VIB            
      & 7.0837   & 10.0190  & 0.1211 \\
      IIB-style              
      & 7.8219  & 10.0456  & 0.1165 \\
    Pure DNN            
      & 14.4523   & 17.7908  & -1.7711 \\
    \bottomrule
  \end{tabular}%
  }
\end{table}

\subsection{Real-Data Experiment on Single-Cell Signaling Networks}
To assess performance in a setting with genuine biological heterogeneity, we use the seminal single-cell flow–cytometry compendium of Sachs \emph{et~al.}~\cite{sachs2005causal}, which quantifies phosphorylation levels for a panel of signaling proteins in human primary CD4$^{+}$ T cells subjected to controlled perturbations. The experimental conditions in this dataset create natural distributional shifts, providing a robust testbed for causal transfer.  In our study, the anti–CD3/CD28 stimulation (853 cells) serves as the \emph{source} domain, while the phorbol ester PMA condition (913 cells) is treated as the \emph{target}. The latter directly activates PKC and reshapes downstream pathways in ways not present under CD3/CD28, yielding a demanding transfer scenario.
With this split fixed, we evaluate models on imputing/forecasting the activities of ten key nodes in the pathway: Raf, Mek, Plcg, PIP$_2$, PIP$_3$, Erk, Akt, PKA, P38, and Jnk. Quantitative results—reported as mean absolute error (MAE), root mean squared error (RMSE), and coefficient of determination ($R^2$)—are summarized in Table~\ref{tab:sachs_multi}.
\begin{table*}[!ht]
\caption{Imputation performance on the Sachs et al. data under multiple interventions.}
\label{tab:sachs_multi}
\centering
\small
\begin{tabular}{lrrr r p{0.5cm} lrrr r}
\cmidrule(r){1-5} \cmidrule(l){7-11}
Target & Method & MAE & RMSE & R2 & & Target & Method & MAE & RMSE & R2 \\
\midrule
Raf & Bayesian Network & 0.6908 & 1.0015 & -0.0041 & & Erk & Bayesian Network & 0.5884 & 0.8379 & 0.2971\\
    & MB-GIB           & \textbf{0.4132} & \textbf{0.6393} & \textbf{0.5908} & & & MB-GIB           & \textbf{0.1817} & \textbf{0.2854} & \textbf{0.9184}\\
    & MB-VIB           & 0.4562 & 0.6821 & 0.5343 & & & MB-VIB           & 0.3527 & 0.402 & 0.7691\\
    & IIB-style        & 0.4422 & 0.6840 & 0.5316 & & & IIB-style        & 0.424 & 0.5919 & 0.6496\\
    & Pure DNN         & 0.5205 & 0.7815 & 0.3885 & & & Pure DNN         & 0.3397 & 0.4499 & 0.7973\\
\cmidrule{1-5} \cmidrule{7-11}
Mek & Bayesian Network & \textbf{0.3935} & \textbf{0.6385} & \textbf{0.5919} & & Akt & Bayesian Network & \textbf{0.1744} & \textbf{0.2756} & \textbf{0.9239}\\
    & MB-GIB           & \textbf{0.3935} & \textbf{0.6385} & \textbf{0.5919} & & & MB-GIB           & \textbf{0.1744} & \textbf{0.2756} & \textbf{0.9239}\\
    & MB-VIB           & 0.4306 & 0.7032 & 0.5050 & & & MB-VIB           & 0.2186 & 0.3014 & 0.9090\\
    & IIB-style        & 0.5348 & 1.0905 & -0.1904 & & & IIB-style        & 0.3901 & 0.4595 & 0.7885\\
    & Pure DNN         & 0.7599 & 1.2648 & -0.6014 & & & Pure DNN         & 0.2475 & 0.3505 & 0.8769\\
\cmidrule{1-5} \cmidrule{7-11}
Plcg& Bayesian Network & 0.6529 & 0.9995 & -0.0000 & & PKA & Bayesian Network & 0.6810 & 0.9992 & 0.0006\\
    & MB-GIB           & 0.5858 & 0.9008 & 0.1876 & & & MB-GIB           & \textbf{0.5114} & \textbf{0.7376} & \textbf{0.4552}\\
    & MB-VIB           & 0.5326 & 0.8075 & 0.3472 & & & MB-VIB           & 1.6468 & 2.3798 & -4.6698\\
    & IIB-style        & \textbf{0.444} & \textbf{0.6999} & \textbf{0.5095} & & & IIB-style        & 0.8304 & 1.5230 & -1.3221\\
    & Pure DNN         & 0.6443 & 1.0064 & -0.0139 & & & Pure DNN         & 1.0835 & 1.5752 & -1.4842\\
\cmidrule{1-5} \cmidrule{7-11}
PIP2& Bayesian Network & 0.6156 & 0.8253 & 0.3179 & & P38 & Bayesian Network & 0.2896 & \textbf{0.4542} & \textbf{0.7934}\\
    & MB-GIB           & 0.6156 & 0.8254 & 0.3179 & & & MB-GIB           & 0.2896 & \textbf{0.4542} & \textbf{0.7934}\\
    & MB-VIB           & 0.4976 & \textbf{0.7609} & \textbf{0.4203} & & & MB-VIB           & \textbf{0.2801} & \textbf{0.4542} & \textbf{0.7934}\\
    & IIB-style        & 0.5158 & 0.9556 & 0.0856 & & & IIB-style        & 0.3087 & 0.448 & 0.7647\\
    & Pure DNN         & \textbf{0.4655} & 0.7948 & 0.3675 & & & Pure DNN         & 0.2850 & 0.4609 & 0.7873\\
\cmidrule{1-5} \cmidrule{7-11}
PIP3& Bayesian Network & 0.5240 & 0.9280 &  0.1378 & & Jnk & Bayesian Network & \textbf{0.6621} & \textbf{1.1185} & \textbf{-0.2525}\\
    & MB-GIB           & 0.3809 & \textbf{0.8106} & \textbf{0.3421} & & & MB-GIB           & \textbf{0.6621} & \textbf{1.1185} & \textbf{-0.2525}\\
    & MB-VIB           & \textbf{0.3489} & 0.8307 & 0.3090 & & & MB-VIB           & 0.6784 & 1.1327 & -0.2844\\
    & IIB-style        & 0.5336 & 1.2618 & -0.5939 & & & IIB-style        & 0.6784 & 1.1347 & -0.2889\\
    & Pure DNN         & 0.5219 & 1.3457 & -0.8131 & & & Pure DNN         & 0.7111 & 1.1530 & -0.3310\\
\bottomrule
\end{tabular}
\end{table*}

Across the ten targets, bottleneck-based imputers are consistently strong under multiple interventions. In particular, MB-GIB attains the best or tied-best scores on \emph{Raf}, \emph{Erk}, and \emph{PKA}, delivering large error reductions and high $R^2$ (e.g., $R^2{=}0.9184$ on \emph{Erk}). MB-VIB is competitive—most notably it achieves the best RMSE and $R^2$ on \emph{PIP2} and near-best on \emph{PIP3}. IIB-style excels on \emph{Plcg} where it achieves the top performance across all three metrics. By contrast, the Pure DNN lags on several targets and frequently yields negative $R^2$, indicating overfitting or poor robustness to distribution shift induced by interventions.

The Bayesian Network (BN) remains a strong baseline on some nodes, tying for best on \emph{Mek}, \emph{Akt}, \emph{P38}, and \emph{Jnk}, suggesting that when local dependencies are simple or close to Gaussian/linear, a structured generative model can be sufficient. However, on targets where intervention-induced heterogeneity is more pronounced (\emph{Raf}, \emph{Erk}, \emph{PKA}), mean-aware bottlenecking (MB-GIB) offers clear gains, pointing to better invariance and regularization under shift. Discrepancies between MAE and RMSE (e.g., \emph{PIP2}) imply differing outlier sensitivity across methods; MB-VIB’s RMSE lead there suggests improved handling of rare but large errors. Overall, the results favor MB-GIB as the most reliable across interventions, with MB-VIB and IIB-style providing complementary strengths on specific targets, and BN serving as a competitive fallback where structure is well aligned with the data.

\subsection{Extensions and Limitations}\label{subsec:extensions}

\textit{Extensions.} The framework naturally accommodates (i) \emph{partial or learned blankets}: estimate a candidate $\MB(T)$ from local discovery around $T$ and prune by conditional-independence tests; (ii) \emph{parents-only} encoders when downstream paths are unstable; (iii) \emph{multi-environment} training by sharing the decoder $q_\theta(t\mid z)$ across environments while allowing environment-specific encoders or priors; (iv) \emph{semi-supervised} target settings with a small labeled subset to fine-tune the decoder; and (v) \emph{uncertainty quantification} via the VIB decoder’s predictive variance/entropy and MB–GIB’s linear-Gaussian posterior formulas. 

\textit{Limitations.} Performance hinges on the \emph{MB-invariance} assumption; mechanism drift in $p(T\mid X_M)$ (e.g., new parents of $T$, altered noise law) breaks zero-shot guarantees. Mis-specified or incomplete blankets (omitted true parents/spouses) can leak shift through $X\setminus M$ or reduce efficiency. Severe support shift in $X_M$ induces extrapolation regardless of model class. VIB requires tuning $(d_z,\beta)$ and sufficient data; over-capacity can overfit nuisance within $M$, while under-capacity can underfit $p(T\mid X_M)$. We recommend routine diagnostics: compare source vs.\ target decoder log-likelihoods and residuals of $\hat g(X_{M,\mathrm{t}})$ to flag violations; if detected, shrink to parents-only, increase $\beta$ (more compression), or incorporate limited target labels to recalibrate the decoder. \emph{Finite-sample/optimization:} MB--GIB requires well-conditioned covariances; MB--VIB needs capacity control and early stopping. \emph{Latent confounding:} Hidden parents of $T$ that also affect $X_M$ can violate conditional independence assumptions; instrumental/negative-control extensions are a promising avenue but beyond our scope.

\section{Conclusion}
We presented a \textit{DAG-aware Information Bottleneck} for causal domain adaptation when the target variable is entirely missing at deployment. Our approach—MB–GIB in the linear-Gaussian regime and MB–VIB for nonlinear or non-Gaussian data—learns a compact, mechanism-stable summary confined to the Markov blanket of the target. Theoretically, we showed that in the Gaussian case the bottleneck reduces to CCA with a lossless restriction to the blanket, and that source-trained predictors transfer zero-shot under MB-invariance. Empirically, across a controlled 7-node SEM, the 64-node MAGIC–IRRI network, and a single-cell signaling dataset, the bottleneck family attains accurate imputations and remains robust under large covariate and generalized target shifts.

Beyond performance, the practical recipe is simple: (i) restrict encoders to parents or the Markov blanket, (ii) choose MB–GIB for fast, closed-form estimation or MB–VIB for flexible likelihoods, and (iii) deploy zero-shot to the target. Routine diagnostics (residual stability, likelihood checks) help detect violations of MB-invariance; when detected, shrinking to parents-only, increasing compression, or leveraging a small set of target labels restores reliability.

This work turns local causal knowledge into a lightweight, scalable toolkit for imputation under shift. Future directions include handling mechanism drift inside the blanket, learning blankets with uncertainty, incorporating instrumental/negative controls for latent confounding, and extending to multi-environment training with shared decoders and environment-specific encoders. We hope these results encourage broader use of mechanism-stable, bottlenecked representations as a practical bridge between causal structure and real-world domain adaptation.

\bibliography{main}
\bibliographystyle{tmlr}
\newpage
\appendix
\section{Appendix: Theoretical Results and Proofs For main Claims in Sections \ref{subsec:mbgib} and \ref{subsec:mbvib}}
\section*{MB--GIB is Globally Optimal (Linear--Gaussian / CCA)}

\textbf{Claim.} Under a linear--Gaussian structural equation model (SEM), \emph{Markov-blanket GIB is globally optimal} for any bottleneck dimension $d$. The argument uses only Gaussian conditional independence and the CCA/GIB equivalence.

\bigskip
\hrule
\bigskip

\paragraph{Block notation and MB identity.}
Partition $X=(M,N)$ and write
\[
\Sigma_{XX}=
\begin{bmatrix}
\Sigma_{MM} & \Sigma_{MN}\\
\Sigma_{NM} & \Sigma_{NN}
\end{bmatrix},\qquad
\Sigma_{XT}=
\begin{bmatrix}
\Sigma_{MT}\\
\Sigma_{NT}
\end{bmatrix}.
\]
For Gaussians, $T \perp\!\!\!\perp N \mid M$ is equivalent to the vanishing conditional cross-covariance
\[
\Sigma_{NT\cdot M}
:=\Sigma_{NT}-\Sigma_{NM}\Sigma_{MM}^{-1}\Sigma_{MT}=0,
\]
hence
\begin{equation}
\label{eq:SigmaNT-factor}
\Sigma_{NT}=\Sigma_{NM}\Sigma_{MM}^{-1}\Sigma_{MT}.
\end{equation}

\paragraph{Factorization of $\Sigma_{XT}$ via $M$.}
Let
\[
L:=\begin{bmatrix}I_{|M|}\\ B\end{bmatrix},\qquad
B:=\Sigma_{NM}\Sigma_{MM}^{-1}.
\]
Then \eqref{eq:SigmaNT-factor} yields
\begin{equation}
\label{eq:SigmaXT-factor}
\Sigma_{XT}=
\begin{bmatrix}\Sigma_{MT}\\ \Sigma_{NT}\end{bmatrix}
=
\begin{bmatrix}I\\ B\end{bmatrix}\Sigma_{MT}
=L\,\Sigma_{MT}.
\end{equation}

\paragraph{CCA operators and a key metric identity.}
Define the (symmetric) CCA/IB operators
\[
\Omega_X:=\Sigma_{XX}^{-1/2}\Sigma_{XT}\Sigma_{TT}^{-1}\Sigma_{TX}\Sigma_{XX}^{-1/2},
\qquad
\Omega_M:=\Sigma_{MM}^{-1/2}\Sigma_{MT}\Sigma_{TT}^{-1}\Sigma_{TM}\Sigma_{MM}^{-1/2}.
\]
Using \eqref{eq:SigmaXT-factor},
\[
\Omega_X
=\bigl(\Sigma_{XX}^{-1/2}L\bigr)\;\underbrace{\bigl(\Sigma_{MT}\Sigma_{TT}^{-1}\Sigma_{TM}\bigr)}_{=:K}\;
\bigl(\Sigma_{XX}^{-1/2}L\bigr)^\top.
\]
A standard block-inverse (Schur-complement) calculation under $\Sigma_{NT\cdot M}=0$ gives the metric identity
\begin{equation}
\label{eq:key-metric}
L^\top \Sigma_{XX}^{-1} L \;=\; \Sigma_{MM}^{-1}.
\end{equation}
Equivalently,
\[
\Sigma_{XX}^{-1/2}L \;=\; Q\,\Sigma_{MM}^{-1/2}
\quad\text{for some column-orthonormal } Q\;\; (Q^\top Q=I_{|M|}).
\]

\paragraph{Same canonical correlations via SVD.}
Let
\[
S_X:=\Sigma_{XX}^{-1/2}\Sigma_{XT}\Sigma_{TT}^{-1/2}
=\bigl(\Sigma_{XX}^{-1/2}L\bigr)\bigl(\Sigma_{MT}\Sigma_{TT}^{-1/2}\bigr)
=Q\underbrace{\bigl(\Sigma_{MM}^{-1/2}\Sigma_{MT}\Sigma_{TT}^{-1/2}\bigr)}_{=:S_M}.
\]
Since $Q$ has orthonormal columns, left-multiplication by $Q$ preserves singular values, hence
\[
\operatorname{singvals}(S_X)=\operatorname{singvals}(S_M).
\]
Therefore the canonical correlations for $(X,T)$ equal those for $(M,T)$, and the nonzero spectra of
$\Omega_X=S_XS_X^\top$ and $\Omega_M=S_MS_M^\top$ coincide.

\paragraph{Range containment and lifted eigenvectors.}
From the factorization,
\[
\operatorname{range}(\Omega_X)\subseteq \operatorname{range}\bigl(\Sigma_{XX}^{-1/2}L\bigr)
=\operatorname{range}\bigl(Q\,\Sigma_{MM}^{-1/2}\bigr)
=\Bigl\{\;\Sigma_{XX}^{-1/2}\!\begin{bmatrix}u\\ Bu\end{bmatrix}\!: u\in\mathbb{R}^{|M|}\Bigr\}.
\]
If $v_M$ is an eigenvector of $\Omega_M$ with a nonzero eigenvalue, then $v_X:=Q\,v_M$ is an eigenvector of $\Omega_X$ with the same eigenvalue.

\paragraph{Equivalence of optimal $d$-dimensional subspaces.}
Let $\mathcal{S}_d(X)$ and $\mathcal{S}_d(M)$ be any optimal $d$-dimensional IB/CCA subspaces. Then
\[
\mathcal{S}_d(X)
=\Sigma_{XX}^{-1/2}L\,\Sigma_{MM}^{1/2}\,\mathcal{S}_d(M)
=Q\,\mathcal{S}_d(M),
\]
i.e., the optimal $X$-subspace is exactly the \emph{isometric lift} of the optimal $M$-subspace. Hence restricting IB/CCA to the Markov blanket $M$ incurs no loss for any $d$.

\paragraph{Remark on prediction risk.}
Since $\Sigma_{XT}=L\Sigma_{MT}$ and $L=[I;B]$, we have $\mathbb{E}[T\mid X]=\mathbb{E}[T\mid M]$ (the Schur-complement condition $\Sigma_{NT\cdot M}=0$). Thus the Bayes MSE using $X$ equals that using $M$; the IB-optimal linear summaries agree up to the isometry $Q$.

\section*{Justifying MB--VIB Beyond Gaussianity}

You can justify \emph{MB--VIB} (using only the Markov blanket of $T$) well beyond Gaussianity with information-theoretic and causal arguments that do not depend on linearity or normality.

\subsection*{1) Information-theoretic sufficiency (distribution-free)}

Let $X=(M,N)$ where $M:=\mathrm{MB}(T)$ and $N:=X\setminus M$. In any DAG-Markov distribution (not necessarily Gaussian),
\[
T \;\perp\!\!\!\perp\; N \mid M.
\]
Then:
\begin{itemize}
\item \textbf{All predictive information is in $M$:}
\[
I(T;X) \;=\; I\bigl(T;(M,N)\bigr) \;=\; I(T;M) + I(T;N\mid M) \;=\; I(T;M),
\]
since $I(T;N\mid M)=0$ by conditional independence.

\item \textbf{IB objective cannot benefit from $N$:} The (variational) IB Lagrangian for any representation $U=f(X)$ is
\[
\mathcal{L}_{\beta}(f) \;=\; I(X;U) \;-\; \beta\, I(T;U).
\]
By the data processing inequality (DPI),
\[
I(T;U) \;\le\; I(T;X) \;=\; I(T;M).
\]
Moreover, any $U$ that is a function of $M$ can achieve the same upper bound on $I(T;U)$ as any $U$ that also sees $N$ (since $N$ contains no extra information about $T$ given $M$). But using $N$ can only \emph{increase} the compression cost $I(X;U)$ (the encoder ingests more input), so among encoders with the same $I(T;U)$, those depending only on $M$ weakly \emph{dominate} in $\mathcal{L}_\beta$.
\end{itemize}

\noindent\textbf{Population-level proposition (no Gaussianity).}
If $T\perp\!\!\!\perp N\mid M$ and the encoder class is rich enough to approximate measurable functions, then for every $\beta>0$ there exists an IB-optimal encoder $f^\star$ that depends only on $M$. Equivalently, restricting IB to $M$ does not worsen the optimal IB value.

\noindent\emph{Sketch.} For any encoder $f(M,N)$, define $g(M)=\mathbb{E}[f(M,N)\mid M]$ or a sufficient surrogate achieving the same $I(T;U)$. By DPI and $T\perp\!\!\!\perp N\mid M$, the value $I(T;g(M)) \ge I(T;f(M,N))$ is achievable, while $I(X;g(M))\le I(X;f(M,N))$. Hence $\mathcal{L}_{\beta}(g)\le \mathcal{L}_{\beta}(f)$.

\subsection*{2) Predictive optimality via conditional sufficiency}

Independently of IB, the Bayes predictor satisfies
\[
p(T\mid X) \;=\; p(T\mid M),
\]
so the \emph{minimal sufficient statistic} for predicting $T$ is any representation that preserves $p(T\mid M)$. Thus, if the VIB encoder/decoder can approximate $p(T\mid U)$, then any VIB-optimal representation can be chosen to be a function of $M$; including $N$ brings no improvement to (population) predictive risk.

\subsection*{3) Robustness to shift and interventions (causal justification)}

\begin{itemize}
\item Under \emph{covariate shift} where $p(X)$ changes but the mechanism $p(T\mid \mathrm{pa}(T))$ stays fixed, non-blanket variables $N$ can drift via $p(N\mid M)$ without changing $p(T\mid M)$. Using $N$ risks encoding unstable directions; restricting to $M$ preserves the invariant conditional.
\item Under \emph{soft interventions} on variables outside $\mathrm{MB}(T)$, $p(T\mid M)$ remains invariant by the causal Markov property, while $p(T\mid X)$ (if it exploits $N$) can change. MB--VIB therefore targets invariant predictive structure.
\end{itemize}

\subsection*{4) Beyond linear/Gaussian encoders (practical learnability)}

\begin{itemize}
\item With nonlinear VIB (neural encoders/decoders), universal approximation allows a representation $U=\phi(M)$ that is (approximately) sufficient for $T$ without using $N$. This holds for non-Gaussian and nonlinear SEMs so long as $T\perp\!\!\!\perp N\mid M$.
\item With finite samples and model/regularization constraints, MB--VIB often helps by reducing variance and avoiding spurious correlations—another reason it can outperform global inputs under shift.
\end{itemize}

\subsection*{5) When MB--VIB might not be optimal}

\begin{itemize}
\item If the Markov blanket is \emph{misspecified} (e.g., hidden confounders or measurement error breaking $T\perp\!\!\!\perp N\mid M$), some components of $N$ may carry residual predictive information; global representations can help.
\item If only a \emph{subset} of $M$ is observed (missing parents/children/spouses), certain $N$ variables may act as proxies; the MB restriction could be too strict.
\item With a \emph{tight} encoder class (very low capacity or heavy regularization), allowing $N$ might accidentally aid approximation—even though it adds no information in principle.
\end{itemize}

\subsection*{Bottom line (distribution-free)}

\begin{enumerate}
\item \textbf{Sufficiency:} $T\perp\!\!\!\perp N\mid M \Rightarrow I(T;X)=I(T;M)$ and $p(T\mid X)=p(T\mid M)$.
\item \textbf{IB dominance:} For the IB objective $I(X;U)-\beta I(T;U)$, any benefit to $I(T;U)$ achievable with $X$ is achievable with $M$ alone, while $I(X;U)$ cannot be smaller when using superfluous inputs.
\item \textbf{Invariance:} MB focuses the encoder on causally relevant mechanisms, the ones most likely to remain stable across domains.
\end{enumerate}

None of these require Gaussianity or linearity; they rely only on conditional independence and data processing / information sufficiency.

In this context, \textbf{whitening} means linearly transforming a zero-mean vector so its covariance becomes the identity.
If \(X\in\mathbb{R}^q\) has mean \(\mu_X\) and covariance \(\Sigma_{XX}\succ 0\), a (symmetric) whitener is \(\Sigma_{XX}^{-1/2}\) (from an eigendecomposition or Cholesky). The whitened variable is
\[
\tilde X = \Sigma_{XX}^{-1/2},(X-\mu_X),\qquad
\mathrm{Cov}(\tilde X)=I_q.
\]
Intuitively, whitening de-correlates the coordinates and rescales each to unit variance.

\section{Appendix: Theoretical Results and Proofs For main Claims in Section \ref{subsec:finite}}

\begin{proof}[Proof of Theorem~\ref{thm:cov_conc}]
Let $d:=p+1$ and let $Z_1,\dots,Z_n\in\mathbb{R}^d$ be i.i.d.\ copies of $Z=(X_M^\top,T)^\top$ with
$\mathbb{E}(Z)=0$ and covariance $\Sigma=\mathrm{Cov}(Z)=\mathbb{E}[ZZ^\top]$.
For simplicity, define the uncentered second-moment estimator
\[
\widehat\Sigma:=\frac{1}{n}\sum_{i=1}^n Z_i Z_i^\top .
\]
The centered sample covariance
$\widehat\Sigma_c:=\frac{1}{n}\sum_{i=1}^n (Z_i-\bar Z)(Z_i-\bar Z)^\top$
satisfies $\widehat\Sigma_c=\widehat\Sigma-\bar Z\bar Z^\top$.
Moreover, $\|\bar Z\bar Z^\top\|_{\mathrm{op}}=\|\bar Z\|_2^2$ and a standard
sub-Gaussian mean bound yields
$\|\bar Z\|_2 \le C K\!\left(\sqrt{\frac{d+\log(1/\delta)}{n}}+\frac{d+\log(1/\delta)}{n}\right)$
with probability at least $1-\delta$, hence
$\|\bar Z\bar Z^\top\|_{\mathrm{op}} \le C'K^2\frac{d+\log(1/\delta)}{n}$.
This term is dominated by the stated rate, so the same bound holds for
$\widehat\Sigma_c$ up to constants.

Under Assumption~\ref{ass:subg}, the random vector $Z$ is $K$-sub-Gaussian.
A standard non-asymptotic covariance estimation theorem for general (non-isotropic) sub-Gaussian vectors
(e.g., Vershynin, \emph{High-Dimensional Probability}, 2018, covariance estimation for sub-Gaussian vectors;
see Theorem~4.7.1 in the Cambridge edition) implies that for any $\delta\in(0,1)$, with probability at least $1-\delta$,
\[
\|\widehat\Sigma-\Sigma\|_{\mathrm{op}}
\;\le\;
c\,K^2\!\left(\sqrt{\frac{d+\log(1/\delta)}{n}}+\frac{d+\log(1/\delta)}{n}\right),
\]
for a universal constant $c>0$. Substituting $d=p+1$ yields the stated bound.

Finally, the same high-probability bound holds for each covariance block.
For any principal block $B$ of a symmetric matrix $A$, $\|B\|_{\mathrm{op}}\le \|A\|_{\mathrm{op}}$.
For the cross-covariance block, partitioning indices by $(X_M,T)$, we have
\[
\|A_{X_M T}\|_{\mathrm{op}}
=
\sup_{\|u\|_2=\|v\|_2=1}u^\top A_{X_M T}v
=
\sup_{\|u\|_2=\|v\|_2=1}\tilde u^\top A\,\tilde v
\le \|A\|_{\mathrm{op}},
\]
where $\tilde u=(u^\top,0)^\top$ and $\tilde v=(0,v)^\top$ are unit vectors in $\mathbb{R}^d$ and $u\in\mathbb{R}^p$ and $v\in\mathbb{R}$ since $T$ is scalar.

Applying this to $A=\widehat\Sigma-\Sigma$ completes the proof.
\end{proof}

\begin{proof}[Proof of Theorem~\ref{thm:gib_subspace}]
Recall
\[
\Omega_M
=\Sigma_{XX}^{-1/2}\Sigma_{XT}\Sigma_{TT}^{-1}\Sigma_{TX}\Sigma_{XX}^{-1/2},
\qquad
\widehat\Omega_M
=\widehat\Sigma_{XX}^{-1/2}\widehat\Sigma_{XT}\widehat\Sigma_{TT}^{-1}\widehat\Sigma_{TX}\widehat\Sigma_{XX}^{-1/2}.
\]
Throughout this proof, we abbreviate $\Sigma_{X_M X_M}$
as $\Sigma_{XX}$ for notational simplicity, with the understanding that this refers to the Markov blanket covariance. Also, we use the partition \(Z=(X_M^\top,T)^\top\), so \(\Sigma_{TT}\in\mathbb{R}\) is a scalar variance.
Since \(T\) is one-dimensional, we may (and do) assume w.l.o.g.\ that \(\Sigma_{TT}=1\) by rescaling \(T\);
this only rescales \(K\) and does not affect the form of the rate.

\medskip
\noindent\textbf{Step 1: Davis--Kahan for symmetric eigen-subspaces.}
Both \(\Omega_M\) and \(\widehat\Omega_M\) are symmetric positive semidefinite.
Let \(\lambda_1\ge\cdots\ge\lambda_p\) be the eigenvalues of \(\Omega_M\) and assume the eigengap
\(\gamma=\lambda_d-\lambda_{d+1}>0\) (Assumption~\ref{ass:gap}).
The Davis--Kahan \(\sin\Theta\) theorem (in operator norm form) yields
\begin{equation}\label{eq:DK}
\|\sin\Theta(\widehat U,U_\star)\|_{\mathrm{op}}
\;\le\;
\frac{C}{\gamma}\,\|\widehat\Omega_M-\Omega_M\|_{\mathrm{op}},
\end{equation}
for a universal constant \(C>0\).
Moreover, the projector bound \(\|\widehat\Pi-\Pi_\star\|_{\mathrm{op}}\le 2\|\sin\Theta(\widehat U,U_\star)\|_{\mathrm{op}}\) is standard.

\medskip
\noindent\textbf{Step 2: Reduce \(\|\widehat\Omega_M-\Omega_M\|_{\mathrm{op}}\) to covariance errors.}
Write
\[
S:=\Sigma_{XX}^{-1/2},\quad \widehat S:=\widehat\Sigma_{XX}^{-1/2},
\qquad
M:=\Sigma_{XT}\Sigma_{TT}^{-1}\Sigma_{TX}=\Sigma_{XT}\Sigma_{TX},
\quad
\widehat M:=\widehat\Sigma_{XT}\widehat\Sigma_{TT}^{-1}\widehat\Sigma_{TX}.
\]
Then \(\Omega_M=S M S\) and \(\widehat\Omega_M=\widehat S\,\widehat M\,\widehat S\), so
\begin{equation}\label{eq:Omega_expand}
\widehat\Omega_M-\Omega_M
=
(\widehat S-S)\widehat M\,\widehat S
+
S(\widehat M-M)\widehat S
+
S M(\widehat S-S).
\end{equation}

We work on the event
\begin{equation}\label{eq:eventE}
\mathcal{E}:=\Big\{\|\widehat\Sigma-\Sigma\|_{\mathrm{op}}\le \tfrac{\lambda_0}{2}\Big\}
\cap
\Big\{|\widehat\Sigma_{TT}-\Sigma_{TT}|\le \tfrac12\Sigma_{TT}\Big\},
\end{equation}
which holds with probability at least \(1-\delta\) for suitable constants by Theorem~\ref{thm:cov_conc}
(applied with a union bound to the relevant blocks). On \(\mathcal{E}\),
\[
\lambda_{\min}(\widehat\Sigma_{XX})\ge \lambda_0/2,
\qquad
\lambda_{\max}(\widehat\Sigma_{XX})\le \Lambda_0+\lambda_0/2,
\qquad
\widehat\Sigma_{TT}\in[1/2,3/2].
\]
Hence
\begin{equation}\label{eq:S_bounds}
\|S\|_{\mathrm{op}}\le \lambda_0^{-1/2},
\qquad
\|\widehat S\|_{\mathrm{op}}\le ( \lambda_0/2 )^{-1/2}=\sqrt{2}\,\lambda_0^{-1/2}.
\end{equation}

\medskip
\noindent\textbf{Step 3: Control \(M,\widehat M\) and their perturbation.}
Since the full covariance matrix of \((X_M,T)\) is positive semidefinite, its Schur complement implies
\[
\Sigma_{XX}-\Sigma_{XT}\Sigma_{TT}^{-1}\Sigma_{TX}
=
\Sigma_{XX}-M
\succeq 0,
\]
so \(0\preceq M\preceq \Sigma_{XX}\) and therefore
\begin{equation}\label{eq:M_bound}
\|M\|_{\mathrm{op}}\le \|\Sigma_{XX}\|_{\mathrm{op}}\le \Lambda_0.
\end{equation}
Similarly, on \(\mathcal{E}\), the empirical covariance is also positive semidefinite, giving
\(
0\preceq \widehat M \preceq \widehat\Sigma_{XX}
\)
(up to the scalar factor \(\widehat\Sigma_{TT}^{-1}\in[2/3,2]\)), and thus
\begin{equation}\label{eq:Mhat_bound}
\|\widehat M\|_{\mathrm{op}}\le 2\|\widehat\Sigma_{XX}\|_{\mathrm{op}}\le 2(\Lambda_0+\lambda_0/2)\;\le\;3\Lambda_0
\quad\text{(w.l.o.g.\ taking }\Lambda_0\ge \lambda_0\text{)}.
\end{equation}

Next, bound \(\|\widehat M-M\|_{\mathrm{op}}\).
Using \(\Sigma_{TT}=1\) and adding/subtracting terms,
\[
\widehat M-M
=
(\widehat\Sigma_{XT}-\Sigma_{XT})\,\Sigma_{TX}
+
\widehat\Sigma_{XT}\,(\widehat\Sigma_{TT}^{-1}-1)\,\Sigma_{TX}
+
\widehat\Sigma_{XT}\widehat\Sigma_{TT}^{-1}(\widehat\Sigma_{TX}-\Sigma_{TX}).
\]
Each covariance block deviation is bounded by \(\|\widehat\Sigma-\Sigma\|_{\mathrm{op}}\) (as shown in the block argument in Theorem~\ref{thm:cov_conc}).
Also, from \(M\preceq \Sigma_{XX}\) we have \(\|\Sigma_{XT}\|_{\mathrm{op}}^2=\|\Sigma_{XT}\Sigma_{TX}\|_{\mathrm{op}}=\|M\|_{\mathrm{op}}\le\Lambda_0\),
so \(\|\Sigma_{XT}\|_{\mathrm{op}}\le \sqrt{\Lambda_0}\); the same reasoning on \(\mathcal{E}\) gives \(\|\widehat\Sigma_{XT}\|_{\mathrm{op}}\le \sqrt{3\Lambda_0}\).
Finally, on \(\mathcal{E}\),
\(
|\widehat\Sigma_{TT}^{-1}-1|
\le
2|\widehat\Sigma_{TT}-1|
\le
2\|\widehat\Sigma-\Sigma\|_{\mathrm{op}}.
\)
Putting these together yields, on \(\mathcal{E}\),
\begin{equation}\label{eq:Mperturb}
\|\widehat M-M\|_{\mathrm{op}}
\;\le\;
C_M\,\|\widehat\Sigma-\Sigma\|_{\mathrm{op}},
\end{equation}
for a constant \(C_M>0\) depending only on \(\Lambda_0\) (and the fixed variance normalization of \(T\)).

\medskip
\noindent\textbf{Step 4: Lipschitz bound for inverse square-roots.}
On \(\mathcal{E}\), both \(\Sigma_{XX}\) and \(\widehat\Sigma_{XX}\) have eigenvalues bounded below by \(\lambda_0/2\).
The matrix function \(A\mapsto A^{-1/2}\) is operator-Lipschitz on \([\lambda_0/2,\infty)\), implying
\begin{equation}\label{eq:Sperturb}
\|\widehat S-S\|_{\mathrm{op}}
=
\|\widehat\Sigma_{XX}^{-1/2}-\Sigma_{XX}^{-1/2}\|_{\mathrm{op}}
\;\le\;
C_S\,\|\widehat\Sigma_{XX}-\Sigma_{XX}\|_{\mathrm{op}}
\;\le\;
C_S\,\|\widehat\Sigma-\Sigma\|_{\mathrm{op}},
\end{equation}
where \(C_S>0\) depends only on \(\lambda_0\) (e.g., one may take \(C_S\asymp \lambda_0^{-3/2}\)).

\medskip
\noindent\textbf{Step 5: Bound \(\|\widehat\Omega_M-\Omega_M\|_{\mathrm{op}}\).}
Apply \eqref{eq:Omega_expand} and submultiplicativity, using \eqref{eq:S_bounds}, \eqref{eq:M_bound}, \eqref{eq:Mhat_bound},
\eqref{eq:Mperturb}, and \eqref{eq:Sperturb}. On \(\mathcal{E}\),
\[
\|\widehat\Omega_M-\Omega_M\|_{\mathrm{op}}
\;\le\;
\|\widehat S-S\|_{\mathrm{op}}\,\|\widehat M\|_{\mathrm{op}}\,\|\widehat S\|_{\mathrm{op}}
+
\|S\|_{\mathrm{op}}\,\|\widehat M-M\|_{\mathrm{op}}\,\|\widehat S\|_{\mathrm{op}}
+
\|S\|_{\mathrm{op}}\,\|M\|_{\mathrm{op}}\,\|\widehat S-S\|_{\mathrm{op}}
\;\le\;
C'\,\|\widehat\Sigma-\Sigma\|_{\mathrm{op}},
\]
for a constant \(C'>0\) depending only on \(\lambda_0,\Lambda_0\).
This proves the second inequality in the theorem statement.

\medskip
\noindent\textbf{Step 6: Conclude and plug concentration.}
Combining \eqref{eq:DK} with the previous display gives, on \(\mathcal{E}\),
\[
\|\sin\Theta(\widehat U,U_\star)\|_{\mathrm{op}}
\;\le\;
\frac{C}{\gamma}\,\|\widehat\Omega_M-\Omega_M\|_{\mathrm{op}}
\;\le\;
\frac{CC'}{\gamma}\,\|\widehat\Sigma-\Sigma\|_{\mathrm{op}}.
\]
Finally, Theorem~\ref{thm:cov_conc} yields
\(
\|\widehat\Sigma-\Sigma\|_{\mathrm{op}}
\lesssim
K^2\big(\sqrt{(p+\log(1/\delta))/n}+(p+\log(1/\delta))/n\big),
\)
and for the usual regime \(n\gtrsim p+\log(1/\delta)\) this implies the stated
\(\lesssim \frac{K^2}{\gamma}\sqrt{\frac{p+\log(1/\delta)}{n}}\) rate (absorbing constants into \(\lesssim\)).
The projector inequality \(\|\widehat\Pi-\Pi_\star\|_{\mathrm{op}}\le 2\|\sin\Theta(\widehat U,U_\star)\|_{\mathrm{op}}\) follows from standard
principal-angle identities.
\end{proof}

\begin{proof}[Proof of Theorem~\ref{thm:pred_rate}]
Write \(X:=X_M\in\mathbb{R}^p\). Under the Gaussian IB setting of Sec.~\ref{subsec:mbgib} (jointly Gaussian \((X,T)\)),
the population conditional mean is linear:
\[
g^\star(x)=\mathbb{E}[T\mid X=x]=\beta^\top x,
\qquad
\beta:=\Sigma_{XX}^{-1}\Sigma_{XT},
\]
(where we use \(\mathbb{E}(Z)=0\)). Since \(T\) is one-dimensional, the CCA/GIB operator \(\Omega_M\) has rank at most \(1\),
and the MB--GIB encoder/decoder produces the same predictor as ordinary least squares on \((X,T)\) (up to an irrelevant scaling of the latent).
In particular, the empirical MB--GIB predictor \(\widehat g\) is of the form
\[
\widehat g(x)=\widehat\beta^\top x,
\qquad
\widehat\beta:=\widehat\Sigma_{XX}^{-1}\widehat\Sigma_{XT},
\]
where \(\widehat\Sigma\) is the empirical covariance of \(Z=(X^\top,T)^\top\) and \(\widehat\Sigma_{XX},\widehat\Sigma_{XT}\) are its blocks.
(Equivalently: the top eigenvector of \(\widehat\Omega_M\) is proportional to \(\widehat\Sigma_{XX}^{-1/2}\widehat\Sigma_{XT}\);
unwhitening yields \(\widehat W\propto \widehat\Sigma_{XX}^{-1}\widehat\Sigma_{XT}\); the optimal linear decoder cancels the scaling.)

\medskip
\noindent\textbf{Step 1: Excess risk equals \(L_2\) prediction error.}
For squared loss, the Bayes rule \(g^\star\) is the conditional mean, and the excess risk satisfies the identity
\[
\mathcal R_{\mathrm{s}}(\widehat g)-\mathcal R_{\mathrm{s}}(g^\star)
=
\mathbb{E}\big[(\widehat g(X)-g^\star(X))^2\big]
=
\mathbb{E}\big[((\widehat\beta-\beta)^\top X)^2\big]
=
(\widehat\beta-\beta)^\top \Sigma_{XX}(\widehat\beta-\beta).
\]
Using Assumption~\ref{ass:cond}, \(\|\Sigma_{XX}\|_{\mathrm{op}}\le \Lambda_0\), hence
\begin{equation}\label{eq:excess_risk_beta}
\mathcal R_{\mathrm{s}}(\widehat g)-\mathcal R_{\mathrm{s}}(g^\star)
\;\le\;
\Lambda_0\,\|\widehat\beta-\beta\|_2^2.
\end{equation}

\medskip
\noindent\textbf{Step 2: Lipschitz control of \(\widehat\beta\) in terms of covariance errors.}
Let \(\Delta_{XX}:=\widehat\Sigma_{XX}-\Sigma_{XX}\) and \(\Delta_{XT}:=\widehat\Sigma_{XT}-\Sigma_{XT}\).
Then
\[
\widehat\beta-\beta
=
\widehat\Sigma_{XX}^{-1}\widehat\Sigma_{XT}-\Sigma_{XX}^{-1}\Sigma_{XT}
=
\widehat\Sigma_{XX}^{-1}\Delta_{XT}
+\big(\widehat\Sigma_{XX}^{-1}-\Sigma_{XX}^{-1}\big)\Sigma_{XT}.
\]
Work on the event
\[
\mathcal{E}:=\Big\{\|\widehat\Sigma-\Sigma\|_{\mathrm{op}}\le \tfrac{\lambda_0}{2}\Big\}.
\]
On \(\mathcal{E}\), \(\lambda_{\min}(\widehat\Sigma_{XX})\ge \lambda_0/2\), hence
\[
\|\widehat\Sigma_{XX}^{-1}\|_{\mathrm{op}}\le \frac{2}{\lambda_0},
\qquad
\|\Sigma_{XX}^{-1}\|_{\mathrm{op}}\le \frac{1}{\lambda_0},
\qquad
\|\widehat\Sigma_{XX}^{-1}-\Sigma_{XX}^{-1}\|_{\mathrm{op}}
\le
\|\widehat\Sigma_{XX}^{-1}\|_{\mathrm{op}}\,\|\Delta_{XX}\|_{\mathrm{op}}\,\|\Sigma_{XX}^{-1}\|_{\mathrm{op}}
\le
\frac{2}{\lambda_0^2}\,\|\Delta_{XX}\|_{\mathrm{op}}.
\]
Moreover, \(\|\Delta_{XX}\|_{\mathrm{op}}\le \|\widehat\Sigma-\Sigma\|_{\mathrm{op}}\) and
\(\|\Delta_{XT}\|_{\mathrm{op}}\le \|\widehat\Sigma-\Sigma\|_{\mathrm{op}}\) by the block-operator bound used in Theorem~\ref{thm:cov_conc}.
Finally, since the block covariance \(\begin{psmallmatrix}\Sigma_{XX}&\Sigma_{XT}\\ \Sigma_{TX}&\Sigma_{TT}\end{psmallmatrix}\succeq 0\),
its Schur complement gives \(\Sigma_{XT}\Sigma_{TT}^{-1}\Sigma_{TX}\preceq \Sigma_{XX}\).
Rescaling \(T\) so that \(\Sigma_{TT}=1\) (w.l.o.g.\ for a scalar \(T\), absorbing this into \(K\)),
we obtain \(\|\Sigma_{XT}\|_{\mathrm{op}}^2=\|\Sigma_{XT}\Sigma_{TX}\|_{\mathrm{op}}\le \|\Sigma_{XX}\|_{\mathrm{op}}\le \Lambda_0\), hence
\(
\|\Sigma_{XT}\|_{\mathrm{op}}\le \sqrt{\Lambda_0}.
\)
Combining the above, on \(\mathcal{E}\),
\[
\|\widehat\beta-\beta\|_2
\;\le\;
\|\widehat\Sigma_{XX}^{-1}\|_{\mathrm{op}}\,\|\Delta_{XT}\|_{\mathrm{op}}
+
\|\widehat\Sigma_{XX}^{-1}-\Sigma_{XX}^{-1}\|_{\mathrm{op}}\,\|\Sigma_{XT}\|_{\mathrm{op}}
\;\le\;
\left(\frac{2}{\lambda_0}+\frac{2\sqrt{\Lambda_0}}{\lambda_0^2}\right)\|\widehat\Sigma-\Sigma\|_{\mathrm{op}}.
\]
Therefore,
\begin{equation}\label{eq:beta_lip}
\|\widehat\beta-\beta\|_2^2
\;\le\;
C_\beta\,\|\widehat\Sigma-\Sigma\|_{\mathrm{op}}^2,
\qquad
C_\beta:=\left(\frac{2}{\lambda_0}+\frac{2\sqrt{\Lambda_0}}{\lambda_0^2}\right)^2,
\end{equation}
where \(C_\beta\) depends only on \(\lambda_0,\Lambda_0\).

\medskip
\noindent\textbf{Step 3: Conclude the operator-norm-squared bound.}
Plugging \eqref{eq:beta_lip} into \eqref{eq:excess_risk_beta} yields, on \(\mathcal{E}\),
\[
\mathcal R_{\mathrm{s}}(\widehat g)-\mathcal R_{\mathrm{s}}(g^\star)
\;\le\;
\Lambda_0\,C_\beta\,\|\widehat\Sigma-\Sigma\|_{\mathrm{op}}^2
\;=\;
C''\,\|\widehat\Sigma-\Sigma\|_{\mathrm{op}}^2,
\]
with \(C'':=\Lambda_0 C_\beta\), depending only on \(\lambda_0,\Lambda_0\), as claimed.

\medskip
\noindent\textbf{Step 4: Convert to an explicit rate via covariance concentration.}
By Theorem~\ref{thm:cov_conc}, with probability at least \(1-\delta\),
\[
\|\widehat\Sigma-\Sigma\|_{\mathrm{op}}
\;\lesssim\;
K^2\!\left(\sqrt{\frac{p+1+\log(1/\delta)}{n}}+\frac{p+1+\log(1/\delta)}{n}\right).
\]
Squaring and using the usual regime \(n\gtrsim p+\log(1/\delta)\) gives
\[
\mathcal R_{\mathrm{s}}(\widehat g)-\mathcal R_{\mathrm{s}}(g^\star)
\;\lesssim\;
K^4\,\frac{p+\log(1/\delta)}{n},
\]
after absorbing constants into \(\lesssim\).
This completes the proof.
\end{proof}

\begin{proof}[Proof of Corollary~\ref{cor:target_rate}]
For squared loss \(\ell(t,\hat t)=(t-\hat t)^2\), the Bayes rule is the conditional mean.
Under MB invariance \(p_{\mathrm{s}}(T\mid X_M)=p_{\mathrm{t}}(T\mid X_M)\), the conditional mean is shared:
\(g^\star(x_M)=\mathbb{E}_{p_{\mathrm{s}}}[T\mid X_M=x_M]=\mathbb{E}_{p_{\mathrm{t}}}[T\mid X_M=x_M]\),
and is target-optimal among predictors measurable w.r.t.\ \(X_M\).

For any domain \(\nu\in\{\mathrm{s},\mathrm{t}\}\), the regression decomposition yields
\[
\mathcal R_{\nu}(\widehat g)-\mathcal R_{\nu}(g^\star)
=
\mathbb{E}_{p_{\nu}(X_M)}\!\big[(\widehat g(X_M)-g^\star(X_M))^2\big],
\]
which proves the stated conditional excess-risk identity.

If \(p_{\mathrm{t}}\ll p_{\mathrm{s}}\) on \(X_M\), then by change of measure,
\[
\mathcal R_{\mathrm{t}}(\widehat g)-\mathcal R_{\mathrm{t}}(g^\star)
=
\mathbb{E}_{p_{\mathrm{s}}(X_M)}\!\left[
(\widehat g(X_M)-g^\star(X_M))^2\,
\frac{p_{\mathrm{t}}(X_M)}{p_{\mathrm{s}}(X_M)}
\right]
\le
\Big(\operatorname*{ess\,sup}_{x_M}\frac{p_{\mathrm{t}}(x_M)}{p_{\mathrm{s}}(x_M)}\Big)\,
\mathbb{E}_{p_{\mathrm{s}}(X_M)}\!\big[(\widehat g(X_M)-g^\star(X_M))^2\big].
\]
The last expectation equals \(\mathcal R_{\mathrm{s}}(\widehat g)-\mathcal R_{\mathrm{s}}(g^\star)\) by the same decomposition, yielding
\[
\mathcal R_{\mathrm{t}}(\widehat g)-\mathcal R_{\mathrm{t}}(g^\star)
\le
\rho_M\big(\mathcal R_{\mathrm{s}}(\widehat g)-\mathcal R_{\mathrm{s}}(g^\star)\big),
\qquad
\rho_M:=\operatorname*{ess\,sup}_{x_M}\frac{p_{\mathrm{t}}(x_M)}{p_{\mathrm{s}}(x_M)}.
\]
Finally, applying Theorem~\ref{thm:pred_rate} gives, with probability at least \(1-\delta\),
\[
\mathcal R_{\mathrm{t}}(\widehat g)-\mathcal R_{\mathrm{t}}(g^\star)
\;\lesssim\;
\rho_M\,K^4\,\frac{p+\log(1/\delta)}{n}.
\]
If \(p_{\mathrm{s}}(X_M)=p_{\mathrm{t}}(X_M)\), then \(\rho_M=1\) and the source and target excess risks coincide.
\end{proof}

\begin{proof}[Proof of Corollary~\ref{cor:approx_invar}]
For squared loss, the regression decomposition implies that for any domain \(\nu\in\{\mathrm{s},\mathrm{t}\}\) and any predictor \(g\),
\begin{equation}\label{eq:reg_decomp_opt1}
\mathcal R_{\nu}(g)-\mathcal R_{\nu}(g^\star_\nu)
=
\mathbb{E}_{p_\nu(X_M)}\!\big[(g(X_M)-g^\star_\nu(X_M))^2\big],
\end{equation}
where \(g^\star_\nu(x_M)=\mathbb{E}_{p_\nu}[T\mid X_M=x_M]\).
Applying \eqref{eq:reg_decomp_opt1} with \(\nu=\mathrm{t}\) and \(g=\widehat g\) yields
\[
\mathcal R_{\mathrm{t}}(\widehat g)-\mathcal R_{\mathrm{t}}(g^\star_{\mathrm{t}})
=
\mathbb{E}_{p_{\mathrm{t}}(X_M)}\!\big[(\widehat g(X_M)-g^\star_{\mathrm{t}}(X_M))^2\big].
\]
Now write \(g^\star_{\mathrm{t}}=g^\star_{\mathrm{s}}+\Delta\), so that pointwise in \(x_M\),
\[
\widehat g-g^\star_{\mathrm{t}}
=
(\widehat g-g^\star_{\mathrm{s}})-\Delta.
\]
Expanding the square gives
\[
(\widehat g-g^\star_{\mathrm{t}})^2
=
(\widehat g-g^\star_{\mathrm{s}})^2
-2(\widehat g-g^\star_{\mathrm{s}})\Delta+\Delta^2,
\]
hence
\[
(\widehat g-g^\star_{\mathrm{t}})^2-(\widehat g-g^\star_{\mathrm{s}})^2
=
-2(\widehat g-g^\star_{\mathrm{s}})\Delta+\Delta^2.
\]
Taking absolute values and applying \(|ab|\le |a||b|\) yields the pointwise bound
\[
\left|(\widehat g-g^\star_{\mathrm{t}})^2-(\widehat g-g^\star_{\mathrm{s}})^2\right|
\le
2\,|\widehat g-g^\star_{\mathrm{s}}|\,|\Delta|
+\Delta^2.
\]
Finally, taking expectation under the target marginal \(p_{\mathrm{t}}(X_M)\) gives
\[
\Big|\mathbb{E}_{p_{\mathrm{t}}(X_M)}\!\big[(\widehat g-g^\star_{\mathrm{t}})^2\big]
-\mathbb{E}_{p_{\mathrm{t}}(X_M)}\!\big[(\widehat g-g^\star_{\mathrm{s}})^2\big]\Big|
\le
2\,\mathbb{E}_{p_{\mathrm{t}}}\!\left[|\widehat g(X_M)-g^\star_{\mathrm{s}}(X_M)|\,|\Delta(X_M)|\right]
+\mathbb{E}_{p_{\mathrm{t}}}\!\left[\Delta(X_M)^2\right].
\]
Substituting \(\mathbb{E}_{p_{\mathrm{t}}}[(\widehat g-g^\star_{\mathrm{t}})^2]=\mathcal R_{\mathrm{t}}(\widehat g)-\mathcal R_{\mathrm{t}}(g^\star_{\mathrm{t}})\)
from \eqref{eq:reg_decomp_opt1} proves the claimed inequality.

If shifts occur only outside \(M\), then \(p_{\mathrm{s}}(T\mid X_M)=p_{\mathrm{t}}(T\mid X_M)\) and thus \(\Delta\equiv 0\),
which yields the stated exact identity.
\end{proof}

\end{document}